\documentclass[10pt,journal,compsoc]{IEEEtran}

\usepackage{
amsfonts,
cite,
pifont,
graphicx,
multirow,
booktabs,
amsmath,
amssymb,
hhline,
url,
hyperref}
\usepackage[ruled]{algorithm2e}
\usepackage[table,xcdraw]{xcolor}
\usepackage{mathtools} 

\newcommand\shline{\specialrule{0.8pt}{0pt}{0pt}}

\usepackage{xcolor}

\usepackage{amsthm}  
\newtheorem{theorem}{Theorem}
\newtheorem{lemma}{Lemma}
\newtheorem{corollary}{Corollary}

\definecolor{darkblue}{rgb}{0, 0, 0.6}
\definecolor{darkgreen}{rgb}{0, 0.7, 0}
\definecolor{darkred}{rgb}{0.8, 0, 0}

\definecolor{mygray}{gray}{0.98}
\definecolor{mygray2}{gray}{0.75}

\let\oldcite\cite
\renewcommand{\cite}[1]{\textcolor{darkblue}{\oldcite{#1}}}

\newcommand{\eg}{\textit{e.g.}}
\newcommand{\ie}{\textit{i.e.}}
\newcommand{\best}[1]{\textbf{\textcolor{red}{#1}}}


\newcommand*\colorcheck{%
  \expandafter\newcommand\csname greencheck\endcsname{\textcolor{darkgreen}{\ding{52}}}%
}
\newcommand*\colorcross{%
  \expandafter\newcommand\csname redcross\endcsname{\textcolor{darkred}{\ding{56}}}%
}
\colorcheck
\colorcross
\newcommand{\etal}{\textit{et al.}}
\newcommand{\aka}{\textit{a.k.a.}}

\hyphenation{op-tical net-works semi-conduc-tor}

\newcounter{proofsection}
\newcommand{\proofsection}[1]{%
    \refstepcounter{proofsection}%
    \section*{#1}%
}

\begin{document}
\title{Plug In, Grade Right: Psychology-Inspired AGIQA}

\author{Zhicheng Liao, Baoliang Chen,  Hanwei Zhu, Lingyu Zhu, Shiqi Wang,~\IEEEmembership{Senior~Member,~IEEE}, and Weisi Lin,~\IEEEmembership{Fellow,~IEEE}
\IEEEcompsocitemizethanks{
\IEEEcompsocthanksitem Zhicheng Liao is with the School of Computer Science, South China Normal University. E-mail: zcliao@m.scnu.edu.cn.
\IEEEcompsocthanksitem Baoliang Chen is with the School of Computer Science, South China Normal University, and the School of Computer Science and Engineering, Nanyang Technological University. E-mail: blchen6-c@my.cityu.edu.hk.

\IEEEcompsocthanksitem  Lingyu Zhu, and Shiqi Wang are with the School of Computer Science, City University of Hong Kong. Emails: lingyzhu-c@my.cityu.edu.hk, shiqwang@cityu.edu.hk.
\IEEEcompsocthanksitem Hanwei Zhu, and Weisi Lin are with the School of Computer Science and Engineering, Nanyang Technological University. Emails: hanwei.zhu@ntu.edu.sg, wslin@ntu.edu.sg.
}}

\IEEEtitleabstractindextext{
\begin{abstract} 

AI-generated image quality assessment (AGIQA) has garnered significant attention in recent years due to its critical role in quality control of AGIs. Existing AGIQA models typically estimate image quality by measuring and aggregating the similarities between image embeddings and text embeddings derived from multi-grade quality descriptions (\eg, ``a photo with excellent/good/poor quality"). Although effective, we observe that such similarity distributions across grades usually exhibit multimodal patterns. For instance, an image embedding may show high similarity to both ``excellent" and ``poor" grade descriptions while deviating from the ``good" one. We refer to this phenomenon as \textbf{\textit{``semantic drift''}}, where semantic inconsistencies between text embeddings and their intended descriptions undermine the reliability of text–image shared-space learning.
To mitigate this issue, we draw inspiration from psychometrics and propose an improved Graded Response Model (GRM) for AGIQA.  The GRM is a classical assessment model that categorizes a subject's ability across grades using test items with various difficulty levels. This paradigm aligns remarkably well with human quality rating, where image quality can be interpreted as an image’s ability to meet various quality grades. Building on this philosophy, we design a two-branch quality grading module: one branch estimates image ability while the other constructs multiple difficulty levels. To ensure monotonicity in difficulty levels, we further model difficulty generation in an arithmetic manner, which inherently enforces a unimodal and interpretable quality distribution.
Our \textbf{A}rithmetic \textbf{G}RM based \textbf{Q}uality \textbf{G}rading (\textbf{AGQG}) module enjoys a plug-and-play advantage, consistently improving performance when integrated into various state-of-the-art AGIQA frameworks. 
Moreover, it also generalizes effectively to both natural and screen content image quality assessment, revealing its potential as a key component in future IQA models. The code will be publicly available.

\end{abstract}

\begin{IEEEkeywords}
AI-generated image, image quality assessment, graded response model, and quality level classification.
\end{IEEEkeywords}}

\maketitle

\section{Introduction}
\IEEEPARstart{W}{e} are undoubtedly living in an era where AI-generated images (AGIs) are produced and consumed at an unprecedented scale, spanning entertainment, education, and artistic creation\cite{chen2024learning,tian2025ai,zhu2023moviefactory}. Despite their growing ubiquity, the quality of AGIs remains highly inconsistent, often exhibiting both perceptual distortions and misalignment with their intended textual prompts\cite{zhang2023perceptual,li2023agiqa,wang2023aigciqa2023}. Such variability poses significant challenges for both users and developers, highlighting the need for a reliable and objective assessment of AGI quality.

Unlike traditional natural image quality assessment (IQA) models that focus mainly on perceptual fidelity, evaluating AGI quality (AGIQA) requires a shift toward multi-modal quality modeling, considering both visual content and textual prompts. Existing AGIQA methods typically follow a \textit{\textbf{``quality prediction by quality grading''}} framework, which has been shown to outperform direct quality regression\cite{tang2025clip,peng2024aigc,talebi2018nima}. For example, IPCE\cite{peng2024aigc} integrates image and text modalities to evaluate the prompt-image alignment by predefining five quality grade descriptions:\\

\noindent $\text {``A photo that }\{v\} \text { matches 'prompt'", } \text{with} \\  v \in[\text { ``badly", ``poorly", ``fairly", ``well", ``perfectly"] }.$ \\

\noindent A text encoder and an image encoder are then adopted to extract the embeddings of the descriptions and the image. The alignment score thus can be estimated by a weighted summation of the relative similarities between the image embedding and the five text embeddings. Analogously, two antonym templates: \{``good photo." and ``bad photo."\}  and six adjective  templates: \{``terrible", ``bad", ``poor", ``average", ``good", and ``perfect"\} are respectively adopted in  CLIP-IQA\cite{wang2023exploring} and CLIP-AGIQA\cite{tang2025clip} for the quality-grade categorization. 

\begin{figure*}[!t]
\centering
\includegraphics[width=1.0\textwidth]{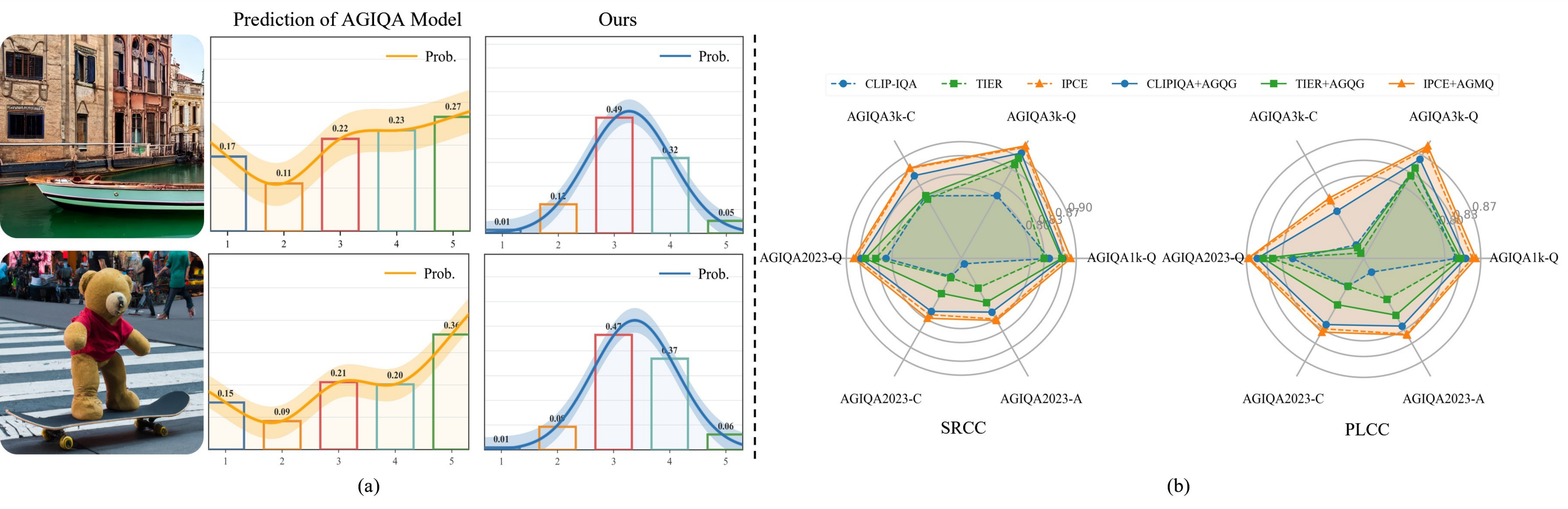}
\vspace{-25pt}
\caption{Illustration of the \textit{``semantic drift"} phenomenon in existing AGIQA models. 
(a) Comparison of quality grade predictions between IPCE and our enhanced model. 
(b) Our AGQG module consistently improves existing AGIQA models (CLIP-IQA\cite{wang2023exploring}, TIER\cite{yuan2024tier}, IPCE\cite{peng2024aigc}) across six diverse datasets spanning perceptual quality---\textit{Q}, prompt consistency---\textit{C}, and perceptual authenticity---\textit{A} dimensions.
}

\label{fig_1}
\end{figure*}
Despite their effectiveness, we observe that the semantics encoded in text embeddings are not always consistent with their corresponding quality descriptions. As illustrated in Fig.~\ref{fig_1}(a), existing AGIQA models usually produce multi-peak prediction distributions, where the image embedding exhibits higher similarity with both ``excellent” (score=5) and ``poor” (score=1) descriptions than other intermediate quality grades.
We attribute this counterintuitive behavior to  \textbf{\textit{Semantic Drift}}, where the semantics of the embedding deviates from its intended textual description. The semantic drift indicates that an unshared space is learned for the text and image modalities, ultimately leading to an inaccurate quality prediction.

To mitigate the semantic drift, we propose a new paradigm for quality-grade categorization. Specifically, we reinvigorate the classical Graded Response Model (GRM) from psychometrics\cite{samejima1969estimation, uto2015item} and integrate it into the quality-grade categorization design. The GRM is a type of Item Response Theory (IRT) model, 
estimating the probability that a respondent will select a particular grade based on their latent traits, such as ability or severity.
For example, in an educational assessment, given the ability level of a subject and a set of test questions with increasing difficulty, the GRM estimates the probability of correctly answering each question by comparing the ability level with the thresholds that define question difficulty.
The categorization philosophy of the GRM naturally aligns with human quality rating, wherein image quality can be likened to a specific subject's ability, and different quality grades mirror the difficulty of ordered response categories.  Following this vein, we introduce an \textbf{A}rithmetic \textbf{G}RM based \textbf{Q}uality \textbf{G}rading (\textbf{AGQG}) module to replace the text-image similarity measurement scheme in existing AGIQA models.

Our AGQG framework consists of two branches: the first branch generates the ability score of the input image, while the second constructs multi-grade difficulties.  To enforce an ordered structure for the difficulty levels, we enforce their thresholds in an arithmetic manner. Theoretically, this arithmetic constraint ensures a single-peak (unimodal) response probability distribution, aligning naturally with human rating patterns. Based on the estimated probabilities for each category, a weighted aggregation scheme is applied to derive the final quality score prediction. Notably,  our AGQG module is independent of image and text encoders, making it a plug-and-play module that can be seamlessly integrated into existing AGIQA models. As shown in Fig.~\ref{fig_1}(b), three AGIQA models with AGQG module obtain consistent performance improvements across all evaluation scenarios. Before delving into the details, we summarize our key contributions as follows:
\begin{itemize}
\item We identify the semantic drift phenomenon in existing AGIQA models, which leads to a multimodal probability distribution in quality-level categorization. This phenomenon highlights the unreliable learning of the text-image shared space, ultimately resulting in inaccurate quality predictions.

\item Inspired by psychological item response theory, we make the first attempt to introduce the GRM into quality-level categorization for AGIQA. Our AGQG module theoretically guarantees a single-peak probability distribution, ensuring consistency with human quality-level ratings. Moreover, its plug-and-play design enables flexible integration into existing AGIQA models for performance enhancement.

\item Extensive experiments on multiple AGIQA datasets and models demonstrate the strong generalization capability of our AGQG module. Furthermore, the consistent enhancement across different content types, including natural images and screen content images, highlights its potential as a fundamental component for future IQA designs.

\end{itemize}

\section{Related Work}
\label{sec:related}

\subsection{No-reference Natural Image Quality Assessment}
\label{subsec:iqa}
Traditional IQA  models are primarily designed for natural images, aiming to provide an objective measure of perceptual quality. Depending on the availability of reference information, IQA models can be classified into three categories: full-reference (FR), reduced-reference (RR), and no-reference (NR) methods. Among these, NR-IQA is especially valuable in practice due to its independence from reference images.
Early NR-IQA models predominantly leveraged natural scene statistics (NSS) to model the statistical regularities of high-quality images. For example, the Mean-Subtracted Contrast-Normalized (MSCN) coefficients were introduced in NIQE\cite{mittal2012making}, enabling quality prediction by measuring the deviations from natural distributions. The NSS characteristics were also widely explored in other transformed domains, including the wavelet domain\cite{moorthy2010two, tang2011learning}, gradient domain\cite{xue2014blind}, and discrete cosine transform (DCT) domain\cite{saad2012blind}. Inspired by the free-energy principle in neuroscience\cite{friston2006free, friston2010free},  the IQA can also be formed as a process of uncertainty estimation relative to internal visual predictions\cite{gu2013no, zhai2011psychovisual, chen2022no}. While these hand-crafted feature-based methods offered valuable insights, they remained limited in capturing the full complexity of the human visual system.

The advent of deep learning marked a paradigm shift in IQA, allowing models to learn quality representations in a data-driven manner\cite{kang2014convolutional, kang2015simultaneous}. Compared with the hand-crafted IQA models, the convolutional neural network (CNN) based models, such as DBCNN\cite{zhang2018blind}, RankIQA\cite{liu2017rankiqa}, FPR\cite{chen2022no}, and GraphIQA\cite{sun2022graphiqa}, have achieved notable improvements in quality prediction accuracy. To further enhance the model generalization across datasets and distortion types, more advanced learning strategies have been investigated, including mixed-dataset training\cite{feng2023learning, zhang2021uncertainty, zhang2023blind}, meta-learning\cite{zhu2020metaiqa}, continual learning\cite{zhang2024task}, and transfer learning\cite{chen2021no, chen2021learning}.
More recently, the emergence of large multimodal models (LMMs) with strong zero-shot reasoning capabilities has opened new frontiers in IQA research. Early studies, such as Q-Bench\cite{wu2023qbench} and DepictQA\cite{you2024descriptive}, introduced benchmark datasets to systematically evaluate LMMs' ability to assess low-level visual attributes. Subsequent works extended this line through multitask adaptation of CLIP\cite{zhang2023blind} and SFT-based approaches, including Q-Align\cite{wu2023align}, Compare2Score\cite{zhu2024adaptive}, DeQA-Score\cite{deqa_score}, Q-Insight\cite{li2025q}, and VisualQuality-R1\cite{wu2025visualquality}, which trained LMMs to produce quality scores, distributions, or descriptive judgments. This emerging paradigm shows great promise for aligning automated assessments with human perception across diverse IQA tasks.

\subsection{AI-Generated Image Quality Assessment}
\label{subsec:clip_based_iqa}
The rapid advancement of generative models, spanning from Generative Adversarial Networks (GANs)\cite{goodfellow2020generative} and autoregressive models\cite{ding2021cogview} to diffusion-based approaches\cite{rombach2022high}, has significantly propelled the development of AGIs. While these models have achieved impressive improvements in terms of resolution and visual fidelity, challenges such as structural inconsistencies, semantic misalignment, and loss of fine-grained details still severely limit their practical applicability. To control and optimize the quality of AGIs, AGIQA has emerged as a critical research topic in recent years. To facilitate AGIQA research, several datasets have been introduced for subjective quality evaluation, including AGIQA-1K\cite{zhang2023perceptual}, AGIQA-3K\cite{li2023agiqa}, and AIGCIQA2023\cite{wang2023aigciqa2023}.

Enabled by these datasets, a variety of AGIQA models have emerged, mainly focusing on two complementary evaluation dimensions: (1) Perceptual Quality Assessment, which evaluates the naturalness and realism of generated images, and (2) Prompt Alignment Assessment, which measures how well an image aligns with its corresponding text prompt. For perceptual quality, traditional metrics such as Inception Score (IS)\cite{salimans2016improved} and Fréchet Inception Distance (FID)\cite{hessel2021clipscore} remain widely adopted. Recently, Fang \etal\cite{fang2024pcqa} proposed a feature adapter and a feature mixer to model the interaction between prompt and visual features for AIGI perceptual quality inference. Fu \etal\cite{fu2024vision} employed learnable textual and visual prompts to enhance the extraction of quality-related cues. CLIP-IQA\cite{wang2023exploring} leveraged cosine similarity between image features and antonym-paired quality-level prompts, achieving superior performance over single-prompt-based quality inference.
For prompt alignment assessment, multi-level textual descriptions were explored to enhance evaluation granularity. For example, in TIER\cite{yuan2024tier}, Peng \etal.\cite{peng2024aigc} evaluated image alignment by computing cosine similarity between generated images and prompts of five quality grades, demonstrating superior performance compared to a direct quality regression. Analogously, CLIP-AGIQA\cite{tang2025clip} learns six quality category prompts to assess both perceptual and alignment quality. Xia et al.\cite{xia2024ai} introduced five-grade quality descriptions to capture both fine-grained and coarse-grained similarities between an image and its corresponding prompt, measuring perceptual quality and prompt alignment within a unified framework.

Despite these advancements, the existing text-image similarity based models remain heavily dependent on predefined quality-grade descriptions. This reliance introduces several challenges. \textbf{First, linguistic vagueness is inevitable;} for example, distinguishing between qualitative terms such as ``high quality” and ``good quality” when defining quality grades can be subjective and inconsistent. Even with extensive manual tuning, the designed prompts are not guaranteed to be optimal for specific downstream tasks. \textbf{Second, the semantic drift between textual descriptions and their corresponding latent embeddings, potentially leading to misalignment in quality assessment.} Therefore, it is in high demand to develop a new paradigm for image quality grading with the above limitations mitigated.

\section{Method}
\label{sec:method}

\begin{figure*}[!t]
\centering
\includegraphics[width=1.0\textwidth]{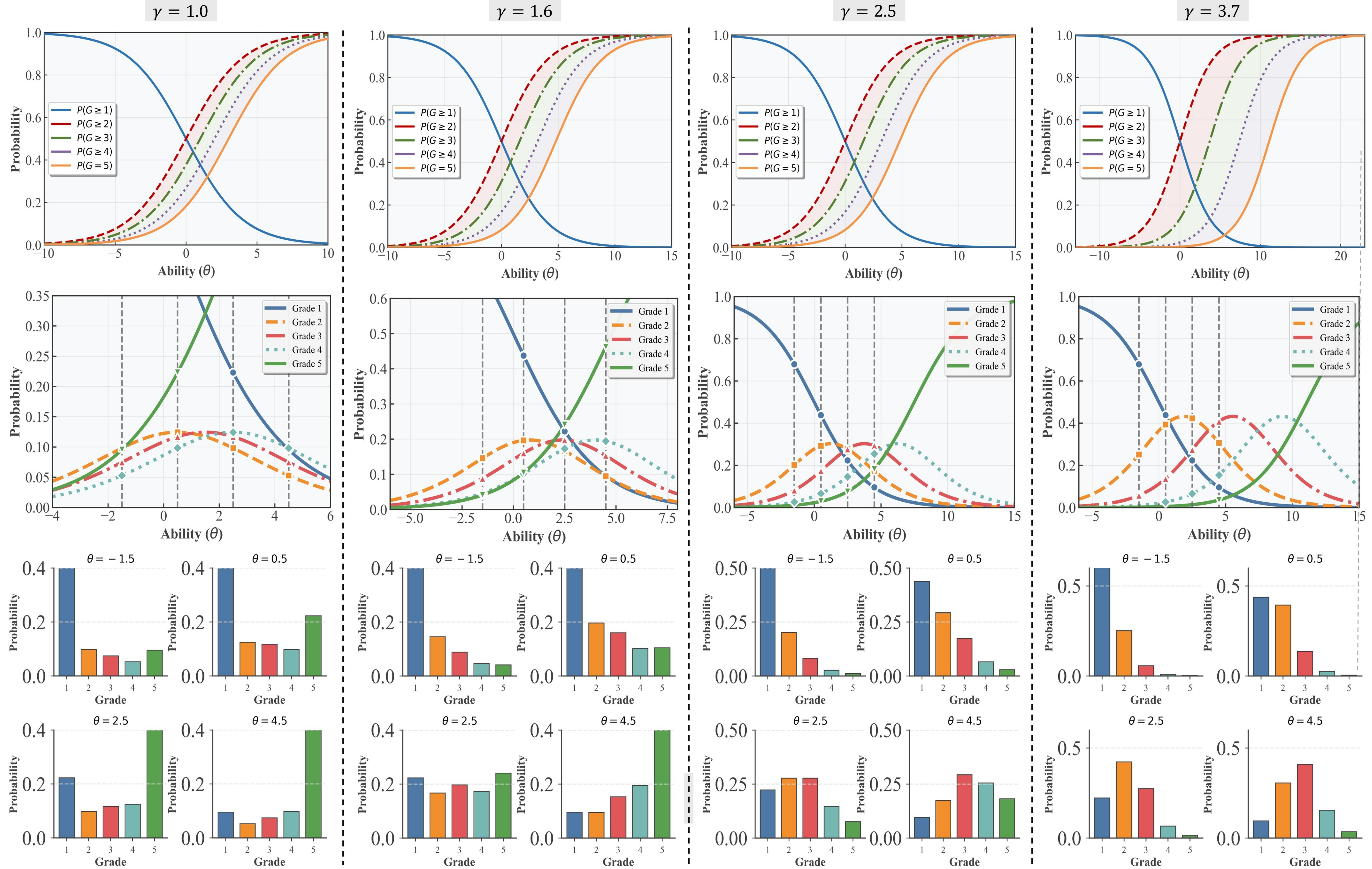}
\caption{ Distributions of the cumulative category probability $P^*(\theta)$ (first row), category response probability $P(\theta)$ (second row), and the category response probability at four specific  $\theta$ (third row). With increasing $\gamma$, the response distributions exhibit enhanced unimodality, aligning more closely with the expected behavior in human quality assessment.}
\label{fig:grm_prob}
\end{figure*}
\subsection{ Preliminaries}
\label{subsec: Preliminaries}
The GRM is widely applied in psychometric research, particularly for assessing self-reported psychological constructs such as anxiety, depression severity, or personal ability. It provides a flexible framework for analyzing polytomous response data, capturing the graded nature of psychological assessments. For example, in an intelligence test, responses to a cognitive reasoning task can be graded on a scale from 0 (``Incorrect") to \( K \) (``Fully correct"), reflecting varying levels of partial correctness. 
For a given item \(j\), the probability that individual \(i\) assigns a rating of at least grade  \(k\) is given by:

\begin{equation}
\label{eqn:1}
    \displaystyle
P_{ij,k}^* = P\left(Y_{ij} \geq k \mid \theta_i\right) \\[1.5ex]
= \frac{1}{1 + e^{-D a_j(\theta_i - \beta_{j,k})}},
\end{equation}
where $P_{i j, k}^*$ is the cumulative category probability at the $k$-th grade and the latent trait \( \theta_i \) represents the cognitive ability of individual $i$. The threshold parameters \( \beta_{j,k} \) define the difficulty levels required to achieve the response category $k$ for item \(j\). \(D\) is a scaling constant that determines the steepness of the item characteristic curve, and \(\alpha\) is the discrimination parameter, indicating how effectively the item differentiates between individuals with different ability levels. Based on Eqn.~(\ref{eqn:1}), the 
probability of an individual grading response category \(k\) can be estimated by:
\begin{equation}
\label{eqn:probs}
P_{i,j,k} = 
\begin{cases}
    \displaystyle
    1 - \frac{1}{1 + e^{-D\alpha(\theta_i - \beta_{j,1})}}, & \text{if } k = 1, \\[1.5ex]
    
    \displaystyle
    \frac{1}{1 + e^{-D\alpha(\theta_i - \beta_{j,k-1})}} - P_{ij,k}^*, & \text{if } 2 \leq k \leq K-1, \\[1.5ex]
    
    \displaystyle
    \frac{1}{1 + e^{-D\alpha(\theta_i - \beta_{j,K-1})}}, & \text{if } k = K.
\end{cases}
\end{equation}
In the following, we omit the subscripts $i$ and $j$ in $P_{i,j,k}$ for simplicity of notation.

\subsection{AGRM-based Quality Grading Module}
\noindent \textbf{Arithmetic Graded Response Model.} Although the GRM is widely used for item response estimation, two limitations remain:

\textbf{(1)} The difficulty levels should be ordinal. Specifically, the threshold parameter of the \((k+1)\)-th level must be greater than the \(k\)-th one, \ie, \( \beta_{k+1} \geq \beta_{k} \) for all \( k \geq 2 \).

\textbf{(2)} The model cannot guarantee the unimodality of the category probability distribution. In other words, it can not preclude the possibility that \( (P_{k^{-}} \geq P_{k}) \land (P_{k^{+}} \geq P_{k}) \) with \( k^{-} < k \) and \( k^{+} > k \), when the target category is the $k$-th item. Such a multimodal probability distribution is unreasonable, as a subject's ability should not simultaneously align more closely with both higher and lower standards compared to the middle-level one.

To account for this, we propose an Arithmetic Graded Response Model (AGRM). Specifically, we define the threshold parameters \( \beta_{1} \) to \( \beta_{K} \) as an arithmetic sequence, \ie,
\begin{align}
    \beta_k &= \beta_1 + (k - 1)\gamma, \quad \text{for } k = 2,3,\dots,K, \\
    &\text{\textit{s.t.}} \quad \gamma > \frac{2\ln 2}{D\alpha}, \quad D > 0, \quad \alpha > 0. \nonumber
\end{align}
This constraint naturally resolves the first limitation. To tackle the second limitation, we demonstrate that the AGRM ensures a unimodal distribution of response probabilities, as shown in Proofs \ref{sec:proofs}. We present several intuitive examples in Fig.~\ref{fig:grm_prob}. 
From the figure, we could observe that:  
(1) The distributions of \( P_1 \) and \( P_K \) are symmetric due to the ``1-" operation.  
(2) For \( k = 2, \dots, K{-}1 \), the distributions are translations of one another, with a translation distance of \( \gamma \). As \( \gamma \) increases, the separation between adjacent distributions becomes larger, making the categorization probabilities more likely to exhibit a unimodal distribution.
When \( \gamma > \frac{2 \ln 2}{D \alpha} > 0 \), the following properties are proven to hold:
\begin{itemize}
    \item \textbf{Maximum Uniqueness.} For any \( \theta \), there exists a unique \( k^* \in [1, K] \) such that \( P_{k^*} \geq P_k \) for all \( k \neq k^* \).  
    \item \textbf{Monotonic Decay.} For any \( \theta \), \( P_k \) decreases monotonically as \( |k - k^*| \) increases when \( P_{k^*} \geq P_k \).  
\end{itemize}
These conditions guarantee the unimodality of the category response probability distribution, mitigating the intrinsic limitations of the traditional GRM model.\\

\begin{figure}[htp]
\centering
\includegraphics[width=0.5\textwidth]{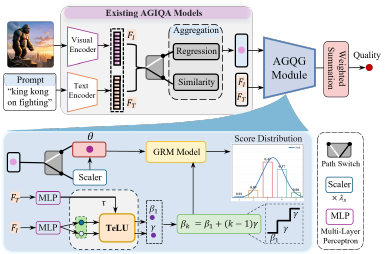}
\caption{Illustration of our AGQG enhancement framework. We adopt the existing AGIQA model for ability estimation and augment it with an additional branch for difficulty estimation. Quality categorization is then performed by the proposed AGRM model using the estimated image ability and quality difficulty. The final quality score is computed as a weighted summation of the categorization results.}
\label{fig:frame}
\end{figure}
\noindent \textbf{AGRM-based Quality Grading Module.} 
The categorization philosophy of AGRM perfectly aligns with human quality-grade ratings. Specifically, image quality can be interpreted as the subject’s ability \( \theta \), while different quality grade standards correspond to different difficulty thresholds \( \beta_{1},\beta_{2},\dots, \beta_{K} \).  Motivated by this insight, we introduce the Arithmetic GRM based Quality Grading (AGQG) module to enhance existing AGIQA models in a plug-and-play manner. Given an image \( I \), a text template \( T \), and their corresponding features \( F_I \) and \( F_T \) extracted by an arbitrary AGIQA model, our AGQG module can be formulated as follows:

\begin{subequations}\label{eq:transfer}
\begin{align}
\theta &= f_{abl}(F_T, F_I), \\
\beta_1 , \gamma &= f_{diff}(F_T, I),
\end{align}
\end{subequations}
where \( f_{abl}(\cdot) \) represents the image ability estimation function, and \( f_{diff}(\cdot) \) predicts both the first quality-grade difficulty \( \beta_1 \) and the difficulty interval \( \gamma \) between consecutive grades. Based on the predicted \( \beta_1 \) and \( \gamma \),  difficulty threshold for the \( k \)-th quality grade (\( k > 1 \)) is determined as:

\begin{equation}
\beta_k = \beta_1 + (k - 1)\gamma, \quad \text{for } k = 2,3,\dots,K.
\end{equation}
As illustrated in Fig.~\ref{fig:frame}, the proposed AGQG module comprises two branches dedicated to image ability estimation and quality difficulty estimation, respectively.

\quad

\noindent (1) \textbf{Ability branch.} 
The ability branch preserves the original text-image feature aggregation function \(f_{\text{agg}}(\cdot)\) from AGIQA, while introducing a rescaling layer to regulate its output. When the final stage of \(f_{\text{agg}}\) applies a softmax activation, the ability score is computed by scaling the output with a fixed sensitivity hyperparameter \(\lambda_s\), enabling the model to represent a broader range of ability parameters. Otherwise, the output of \(f_{\text{agg}}\) is used directly. Formally,
\begin{equation}
\label{eq:theta}
\theta =
\begin{cases}
\lambda_s \cdot f_{\text{agg}}(F_T, F_I), & \text{if softmax is applied}, \\[6pt]
f_{\text{agg}}(F_T, F_I), & \text{otherwise}.
\end{cases}
\end{equation}

\quad

\noindent (2) \textbf{Difficulty branch.} 
The difficulty branch aims to estimate two parameters, \(\beta_1\) and \(\gamma\), which describe the quality difficulty of the given image. Herein, we design the estimation process based on the observation that the text features \(F_T\) inherently contain high-level semantic cues, which provide a reliable basis for estimating the base difficulty parameters \(\beta_1^T\) and \(\gamma^T\). These text-derived parameters serve as coarse priors and are subsequently modulated by the image-specific features \(F_I\) to capture instance-dependent visual variations. Specially, we first project the text features into two initial difficulty parameters :
\begin{equation}
\beta_1^T = \phi_{\beta}(F_T), 
\quad
\gamma^T = \phi_{\gamma}(F_T),
\label{enphi}
\end{equation}
where \(\phi_{\beta}(\cdot)\) and \(\phi_{\gamma}(\cdot)\) are separate learnable mappings.
Then the image features are processed  to produce the temperature coefficient:
\begin{equation}
\tau_I = \phi_I(F_I).
\end{equation}
The final difficulty parameters are computed by adaptively modulating the text-derived priors with the image-dependent temperature coefficient, based on the Temperatured TeLU activation function~\cite{fernandez2024telu}:
\begin{align}
\beta_1 &= (\beta_1^T + \tau_I) \cdot \tanh\!\big(\exp(\beta_1^T + \tau_I)\big), \\
\gamma   &= (\gamma^T + \tau_I) \cdot \tanh\!\big(\exp(\gamma^T + \tau_I)\big) + \eta,
\end{align}
where \(\eta\) is a small positive hyperparameter ensuring positivity and minimal separation between \(\beta_1\) and \(\gamma\). This formulation enables the difficulty branch to combine stable semantic priors with adaptive modulation, allowing the estimated difficulty to reflect both task-related semantics and image-specific complexity.

Based upon the obtained \(\theta\), \(\beta_1\), and \(\gamma\), the probability of an image being categorized into the \(k\)-th quality grade can be estimated by:
\begin{equation}
\label{eqn:pkk}
P_k = \begin{cases}
    1 - \dfrac{1}{1 + e^{-D\alpha(\theta - \beta_{1})}}, & \text{if } k = 1, \\[4pt]
    \dfrac{\varphi_k(e^{D\alpha\gamma} - 1)}{(1 + \varphi_k)(1 + \varphi_k \cdot e^{D\alpha\gamma})}, & \text{if } 2 \leq k \leq K-1, \\[4pt]
    \dfrac{1}{1 + e^{-D\alpha(\theta - \beta_{1} - (K-2)\gamma)}}, & \text{if } k = K,
\end{cases}
\end{equation}
with
\begin{equation}
\varphi_k = e^{-D\alpha(\theta - \beta_1 - (k - 2)\gamma)},
\end{equation}
where \(D\) and \(\alpha\) are fixed positive parameters shared across different AGIQA models.

\subsection{Plug-and--Play Integration into AGIQA Models}
As shown in Fig.~\ref{fig:frame},  our AGRM-QC module can be seamlessly integrated into various AGIQA models in a plug-and-play manner, as it operates independently of the framework design of the image encoder $f_I(\cdot)$ and text encoders $f_T(\cdot)$.  In the integration process, different AGIQA models usually employ distinct text templates tailored to the multi-dimensional quality assessment. Herein, we omit their predefined quality-grade descriptions to describe the image’s content and quality and \textbf{avoid linguistic vagueness}. The unifying text template $T$  is defined as follows:

\noindent  $\bullet$ For the perceptual quality assessment,  $T$ = ``A photo of good quality and clear details''. \\
\noindent  $\bullet$ For the image authenticity assessment,  $T$ = ``'A photo with genuine scene content and no synthetic artifacts''. \\
\noindent  $\bullet$ For the prompt alignment assessment,  we set $T$ as the image generation prompt by default.

Once the quality categorization result \( P_{k} \) is obtained from Eqn.~(\ref{eqn:pkk}), the image quality score \( Q \) (which can represent perceptual quality, authenticity quality, or alignment quality) is estimated via a weighted aggregation:
\begin{equation}
Q = \sum_{k=1}^{K} k \cdot P_{k}, \quad k = 1, 2, \ldots, K.
\end{equation}
During training, we employ a combination of mean absolute error (MAE) loss \( \mathcal{L}_{{MAE}} \) and Pearson linear correlation coefficient (PLCC) loss \( \mathcal{L}_{{PLCC}} \), which can be formulated by:
\begin{equation}
\label{tot_loss}
\mathcal{L} = \mathcal{L}_{{MAE}} + \lambda\mathcal{L}_{{PLCC}},
\end{equation}
where \( \lambda \) are hyperparameters that balance the contributions of different loss terms. The MAE loss is defined as:
\begin{align}
\mathcal{L}_{{MAE}} &= \frac{1}{N}\sum_{i=1} ^{N}\big| Q^T_i - Q_i\big|,\\
 Q_i&= (\tilde{Q}_i - 1) \times \frac{5}{K-1} ,
\end{align}
where $N$ is the bach size, \(Q^T_i \) and \(\tilde{Q}_i \) are the ground truth and predicted quality scores of the $i$-th image, respectively. The PLCC loss is defined by:
\begin{align}
\mathcal{L}_{\text{PLCC}} &=  \frac{1}{N}\left(\lVert \hat{Q_i} - \hat{Q}^T_i \rVert_2^2 + \lVert \rho \cdot \hat{Q} - {Q}^T \rVert_2^2\right) ,  
\end{align}
where
\begin{align}
\hat{Q}_i &= \frac{Q_i - \mu_Q}{\sigma_Q + \epsilon}, \quad
\hat{Q}^T_i = \frac{Q^T_i - \mu_{Q^T}}{\sigma_{Q^T} + \epsilon}, \quad
\rho = \frac{1}{N} \sum_{i=1}^N \hat{Q}_i \cdot \hat{Q}_{i}^T, 
\end{align}
with $\mu_Q$ and $\sigma_{Q}$ being the mean and standard deviation of the predicted quality scores in a batch, and $\mu_{Q^T}$ and $\sigma_{Q^T}$ being the corresponding ground truth values.

\section{Experiments}

\subsection{Experimental Settings}
\textbf{Datasets.}
We conduct experiments on three benchmark datasets: AGIQA-1K\cite{zhang2023perceptual}, AGIQA-3K\cite{li2023agiqa}, and AIGCIQA2023\cite{wang2023aigciqa2023}.  
The AGIQA-1K dataset comprises 1,080 AGIs synthesized using two latent text-to-image diffusion models: Stable Inpainting
V1 and Stable Diffusion V2\cite{rombach2022high}. Constructed from diverse textual prompts, it covers a broad range of semantic content, including birds, humans, urban, and natural landscapes. The dataset spans multiple artistic styles, from anime to photorealistic renderings, ensuring variation in both structure and aesthetics. Each image is annotated with a Mean Opinion Score (MOS) of human perceptual quality.
The AGIQA-3K dataset comprises 2,982 images generated by six state-of-the-art models: AttnGAN \cite{gen:AttnGAN}, DALLE2 \cite{gen:DALLE2}, GLIDE\cite{gen:GLIDE}, Midjourney\cite{gen:MJ}, Stable Diffusion\cite{gen:SD} and Stable Diffusion XL\cite{gen:XL}. These models involve a diverse range of generative architectures, including GAN-based, auto-regressive, and diffusion-based approaches. Each image is annotated with MOSs for both perceptual quality and prompt alignment, facilitating a comprehensive assessment of text-to-image consistency. 
The AIGCIQA2023 dataset consists of 2,400 images generated by six generative models. These images are derived from prompts spanning 100 distinct scenarios, and each scenario generates four possible images. The assessment dimensions focus on three key perspectives: perceptual quality, image authenticity, and prompt consistency (alignment).\\

\begin{table*}[!t]
\centering
\renewcommand{\arraystretch}{1.1}
\caption{Performance comparison under the intra-dataset setting on the AGIQA-3K\cite{li2023agiqa} and AIGCIQA2023\cite{wang2023aigciqa2023}  datasets. ``AGQG +" denotes incorporation of our AGQG  module. The best and second-best results are highlighted in \textbf{bold} and \underline{underline}, respectively.}
\vspace{-10pt}
\label{tab:comparison}
\resizebox{1.0\textwidth}{!}{
\begin{tabular}{l|c c|c c| c c|c c|c c}
    \shline
    \rowcolor[HTML]{EFEFEF}
    \multicolumn{1}{l|}{\cellcolor[HTML]{EFEFEF}} &\multicolumn{4}{c|}{\cellcolor[HTML]{EFEFEF}AGIQA-3K} &
      \multicolumn{6}{c}{\cellcolor[HTML]{EFEFEF}AIGCIQA2023} \\
    \hhline{>{\arrayrulecolor[HTML]{EFEFEF}}->{\arrayrulecolor{black}}|----------}
    \rowcolor[HTML]{EFEFEF}
    \multicolumn{1}{l|}{\cellcolor[HTML]{EFEFEF}} &
      \multicolumn{2}{c|}{Quality} &\multicolumn{2}{c|}{Consistency} &\multicolumn{2}{c|}{Quality} &\multicolumn{2}{c|}{Consistency} &
      \multicolumn{2}{c}{Authenticity} \\
     \hhline{>{\arrayrulecolor[HTML]{EFEFEF}}->{\arrayrulecolor{black}}|----------}
    \rowcolor[HTML]{EFEFEF}
    \multicolumn{1}{l|}{\multirow{-3}{*}{\cellcolor[HTML]{EFEFEF}\textbf{Method}}}& SRCC & PLCC & SRCC & PLCC & SRCC & PLCC & SRCC & PLCC & SRCC & PLCC \\
    \hline \hline
    VGG16\cite{simonyan2014very} & 0.8167 & 0.8752 & 0.6867 & 0.8108 & 0.8157 & 0.8282 & 0.6839 & 0.6853 & 0.7399 & 0.7465  \\
    ResNet50\cite{he2016deep} & 0.8552 & 0.9072 & 0.7493 & 0.8564 & 0.8190 & 0.8503 & 0.7230 & 0.7270 & 0.7571 & 0.7563 \\
    ViT/B/16\cite{dosovitskiy2020image} & 0.8659 & 0.9115 & 0.7478 & 0.8449 & 0.8370 & 0.8618 & 0.7293 & 0.7439 & 0.7783 & 0.7697  \\
    ViL\cite{alkin2024visionlstm} & 0.8750 & 0.9145 & 0.7570 & 0.8561 & 0.8436 & 0.8770 & 0.7174 & 0.7296 & 0.7753 & 0.7770  \\ 
    \hline
    DBCNN\cite{zhang2018blind} & 0.8154 & 0.8747 & 0.6329 & 0.7823 & 0.8339 & 0.8577 & 0.6837 & 0.6787 & 0.7485 & 0.7436 \\
    StairIQA\cite{sun2022blind} & 0.8439 & 0.8989 & 0.7345 & 0.8474 & 0.8264 & 0.8483 & 0.7176 & 0.7133 & 0.7596 & 0.7514  \\
    MGQA\cite{wang2021multi} & 0.8283 & 0.8944  & 0.7244 & 0.8430 & 0.8093 & 0.8308 & 0.6892 & 0.6745 & 0.7367 & 0.7310  \\
    HyperIQA\cite{Su_2020_CVPR} & 0.8526 & 0.8975 & 0.7437 & 0.8471 & 0.8357 & 0.8504 & 0.7318 & 0.7222 & 0.7758 & 0.7790   \\
    AMFF-Net\cite{zhou2024adaptive} & 0.8565 & 0.9050 & 0.7513 & 0.8476 & 0.8409 & 0.8537 & 0.7782 & 0.7638 & 0.7749 & 0.7643  \\
    SSL-IPF\cite{zhao2025self} & 0.8523 & 0.9045  & - & - & 0.8632 & 0.8678 & 0.7958 & 0.7911 & 0.7774 & 0.7617  \\
    \hline
    TIER \cite{yuan2024tier} & 0.8251 & 0.8821 & 0.6541 & 0.7968 & 0.8194 & 0.8359 & 0.7033 & 0.6966 & 0.7323 & 0.7226  \\
    CLIP-IQA \cite{wang2023exploring} & 0.8426 & 0.8053 & 0.6720 & 0.8043 & 0.7802 & 0.8140 & 0.7040 & 0.6941 & 0.6726 & 0.6627  \\
    CLIP-AGIQA \cite{tang2025clip} & 0.8618 & 0.8978 & - & - & 0.8140 & 0.8302 & - &  - & 0.7940 & 0.7797 \\
    IPCE\cite{peng2024aigc} & 0.8841 & 0.9246 & \underline{0.7697} & \underline{0.8725} & \underline{0.8640} & 0.8788 & \underline{0.7979} & \underline{0.7887} & 0.8097 & 0.7998 \\
\hline
\rowcolor{mygray}
    AGQG +TIER & 0.8421 & 0.8977 & 0.6663 & 0.8054 & 0.8376 & 0.8586 & 0.7445 & 0.7360 & 0.7676 & 0.7586 \\
    \rowcolor[HTML]{FFEEED}
    \textit{Improvement}
  & $\uparrow\best{2.06\%}$ & $\uparrow\best{1.77\%}$ & $\uparrow\best{1.87\%}$ & $\uparrow\best{1.08\%}$ & $\uparrow\best{2.22\%}$ & $\uparrow\best{2.72\%}$ & $\uparrow\best{5.86\%}$ & $\uparrow\best{5.66\%}$ & $\uparrow\best{4.82\%}$ & $\uparrow\best{4.98\%}$ \\
    \rowcolor{mygray}
    AGQG +CLIP-IQA  & 0.8614 & 0.9106 & 0.7467 & 0.8546 & 0.8493 & 0.8685 & 0.7885 & 0.7806 & 0.7919 & 0.7825  \\
    \rowcolor[HTML]{FFEEED}
    \textit{Improvement}
  & $\uparrow\best{2.23\%}$ & $\uparrow\best{13.08\%}$ & $\uparrow\best{11.12\%}$ & $\uparrow\best{6.25\%}$ & $\uparrow\best{8.86\%}$ & $\uparrow\best{6.70\%}$ & $\uparrow\best{12.00\%}$ & $\uparrow\best{12.46\%}$ & $\uparrow\best{17.73\%}$ & $\uparrow\best{18.08\%}$ \\
    \rowcolor{mygray}
    AGQG +CLIP-AGIQA & \underline{0.8864} & \underline{0.9268} & - & - & 0.8600 & \underline{0.8804} & - & - & \textbf{0.8131} & \textbf{0.8036} \\
    \rowcolor[HTML]{FFEEED}
    \textit{Improvement}
  & $\uparrow\best{2.85\%}$ & $\uparrow\best{3.23\%}$ & – & – & $\uparrow\best{5.65\%}$ & $\uparrow\best{6.05\%}$ & – & – & $\uparrow\best{2.41\%}$ & $\uparrow\best{3.07\%}$ \\
    \rowcolor{mygray}
    AGQG +IPCE  & \textbf{0.8911} & \textbf{0.9284} & \textbf{0.7717} & \textbf{0.8741} & \textbf{0.8671} & \textbf{0.8846} & \textbf{0.8052} & \textbf{0.7978} & \underline{0.8115} & \underline{0.8031} \\
    \rowcolor[HTML]{FFEEED}
    \textit{Improvement}
  & $\uparrow\best{0.79\%}$ & $\uparrow\best{0.41\%}$ & $\uparrow\best{0.26\%}$ & $\uparrow\best{0.18\%}$ & $\uparrow\best{0.36\%}$ & $\uparrow\best{0.66\%}$ & $\uparrow\best{0.91\%}$ & $\uparrow\best{1.15\%}$ & $\uparrow\best{0.22\%}$ & $\uparrow\best{0.41\%}$ \\
    \shline
\end{tabular}}
\end{table*}

\noindent\textbf{Implementation Details.}
To verify the effectiveness and generalization capability of our AGQG   module, we incorporate it into four existing AGIQA models, including CLIP-IQA\cite{wang2023exploring}, TIER\cite{yuan2024tier}, CLIP-AGIQA\cite{tang2025clip}, and IPCE\cite{peng2024aigc}. Specifically, we set \(\lambda_s\) in Eqn.~(\ref{eq:theta}) to 10.0 and \(\lambda\) in Eqn.~(\ref{tot_loss}) to 1.0. The parameters \(D\) and \(\alpha\) in Eqn.~(\ref{eqn:pkk}) are set to 1.7 and 1.0, respectively.
We employ the AdamW optimizer\cite{loshchilov2017decoupled} to train the models, with a weight decay of \(10^{-3}\) and learning rates of \(1 \times 10^{-5}\). The learning rate is adjusted using a cosine annealing scheduler\cite{10104453} with a maximum period of 5 epochs. Each model is trained for 100 epochs with a batch size of 16. 
All experiments are conducted on a single NVIDIA GeForce RTX 3090 GPU. For performance assessment, we adopt the Spearman Rank-Order Correlation Coefficient (SRCC) and Pearson's Linear Correlation Coefficient (PLCC) as comparison metrics. Specifically, the SRCC measures the monotonic relationship between the predicted results and MOS, while PLCC quantifies the linear correlation between the two variables. Higher values of both SRCC and PLCC indicate better model performance.

\subsection{Performance Enhancement on  Intra-dataset}
\label{subsec:transferability}
To validate the effectiveness of our AGQG  module, we integrate it into four existing AGIQA frameworks, which can be categorized into two groups: 1) Quality regression based model: TIER, where quality is regressed from a concatenated representation of features extracted from the images and their corresponding prompts. 2) Quality categorization based models: CLIP-IQA, CLIP-AGIQA, and IPCE, where the image quality is graded into a set of quality-level descriptions, and the final quality score is obtained by an aggregation of the grading probabilities. The experiments are conducted on AGIQA-3K, AIGCIQA2023, and AGIQA-1K datasets, with results presented in Table~\ref{tab:comparison} and Table~\ref{tab:agiqa1k-p}. From these tables, we could observe: 

\quad

\noindent \textbf{(1) Consistent Performance Gains Across Frameworks and Datasets}: The integration of our AGQG  module consistently improves the performance across various AGIQA frameworks, datasets, and evaluation dimensions, demonstrating the strong generalization capability of our approach. In Fig.~\ref{fig:scatter}, we further present the scatter plots of our predicted scores against the MOSs. The higher linearity and the enhanced proximity of the scatter points to the ideal identity line demonstrate that the integration of our AGQG  module yields predictions that are significantly more consistent with human perceptual judgments.
To further evaluate the quality discrimination ability in separating high- and low-quality images, we visualize the density distributions of predicted quality scores in Fig.~\ref{fig:density}. 
Specifically, high- and low-quality samples are separated using the mean of MOSs as a threshold. Across all datasets, the AGQG integrated models exhibit consistently lower overlap between the two distributions compared to their original counterparts. 
This reduced overlap demonstrates the enhanced capability of AGQG  to capture intrinsic quality characteristics and better distinguish between different quality levels.

\noindent \textbf{(2) Addressing Semantic Drift in Quality Grading Models}:  
As illustrated  in Fig.~\ref{fig:prob}, existing quality-grading models usually produce multi-peak score distributions, indicating misalignment 
between the learned text embeddings and their intended semantic descriptions of quality levels.
Notably, this issue is observed even in the state-of-the-art model, IPCE.
By integrating the proposed  AGQG  module, the distributions become consistently unimodal across different images, effectively mitigating semantic drift and leading to improved assessment performance.


\noindent \textbf{(3) Reducing Dependence on Prompt Optimization}:
The performance of CLIP-based IQA models is highly sensitive to prompt design. For example, CLIP-AGIQA, which incorporates learnable prompts, achieves noticeably better results than the original CLIP-IQA with fixed prompts, highlighting the critical role of prompt optimization in quality prediction. In contrast, our AGQG  module mitigates the reliance on prompt optimization by utilizing fixed, task-specific textual templates. Despite using non-learnable prompts, our method still surpasses CLIP-IQA by over 10\% in average PLCC across all  assessment dimensions, demonstrating its ability to achieve stable and accurate predictions while greatly simplifying the training process.

\begin{table}[!t]
\centering
\renewcommand{\arraystretch}{1.2}
\caption{Performance comparison results under the intra-dataset setting on the  AGIQA-1K\cite{zhang2023perceptual} dataset. ``AGQG +" denotes incorporation of our AGQG  module. The best and second-best results are highlighted in \textbf{bold} and \underline{underline}, respectively.}
\vspace{-10pt}
\label{tab:agiqa1k-p}
\resizebox{\columnwidth}{!}{
\begin{tabular}{l|c|ccc}
\shline
\rowcolor[HTML]{EFEFEF} 
\textbf{Methods} & \textbf{Year} & \textbf{SRCC} & \textbf{PLCC} & \textbf{Average} \\ \hline \hline
CNNIQA\cite{kang2014convolutional} & \textit{2014} & 0.5800 & 0.7139 & 0.6470 \\
DBCNN\cite{zhang2018blind} & \textit{2018} & 0.7491 & 0.8211 & 0.7851 \\
MGQA\cite{wang2021multi} & \textit{2021} & 0.6011 & 0.6760 & 0.6386 \\
StairIQA\cite{sun2023blind} & \textit{2023} & 0.5504 & 0.6088 & 0.5796 \\
PSCR\cite{yuan2023pscr} & \textit{2023} & 0.8430 & 0.8403 & 0.8417 \\
\midrule
TIER\cite{yuan2024tier} & \textit{2024} & 0.8266 & 0.8297 & 0.8282 \\
CLIPIQA\cite{wang2023exploring} & \textit{2023} & 0.8227 & 0.8411 & 0.8319 \\
CLIP-AGIQA\cite{tang2025clip} & \textit{2024} & 0.7918 & 0.8339 & 0.8129 \\
IPCE\cite{peng2024aigc} & \textit{2024} & {0.8535} & 0.8792 & 0.8664 \\
\hline
\rowcolor{mygray}
AGQG +TIER & - & 0.8312 & 0.8679 & 0.8496 \\
\rowcolor[HTML]{FFEEED}
\textit{Improvement} & - & $\uparrow$ \best{0.56\%} & $\uparrow$ \best{4.60\%} & $\uparrow$ \best{2.58\%} \\
\rowcolor{mygray}
AGQG +CLIP-IQA & - & 0.8402 & 0.8707 & 0.8555 \\
\rowcolor[HTML]{FFEEED}
\textit{Improvement} & - & $\uparrow\best{2.13\%}$ & $\uparrow$ \best{3.52\%} & $\uparrow$ \best{2.84\%} \\
\rowcolor{mygray}
AGQG +CLIP-AGIQA & - & \underline{0.8544} & \underline{0.8831} & \underline{0.8688} \\
\rowcolor[HTML]{FFEEED}
\textit{Improvement} & - & $\uparrow$ \best{7.91\%} & $\uparrow$ \best{5.90\%} & $\uparrow$ \best{6.88\%} \\
\rowcolor{mygray}
AGQG +IPCE & - & \textbf{0.8606} & \textbf{0.8881} & \textbf{0.8744} \\
\rowcolor[HTML]{FFEEED}
\textit{Improvement} & - & $\uparrow$ \best{0.83\%} & $\uparrow$ \best{1.01\%} & $\uparrow$ \best{0.92\%} \\
\shline
\end{tabular}}
\end{table}

\begin{figure*}[!t]
\centering
\includegraphics[width=\textwidth]{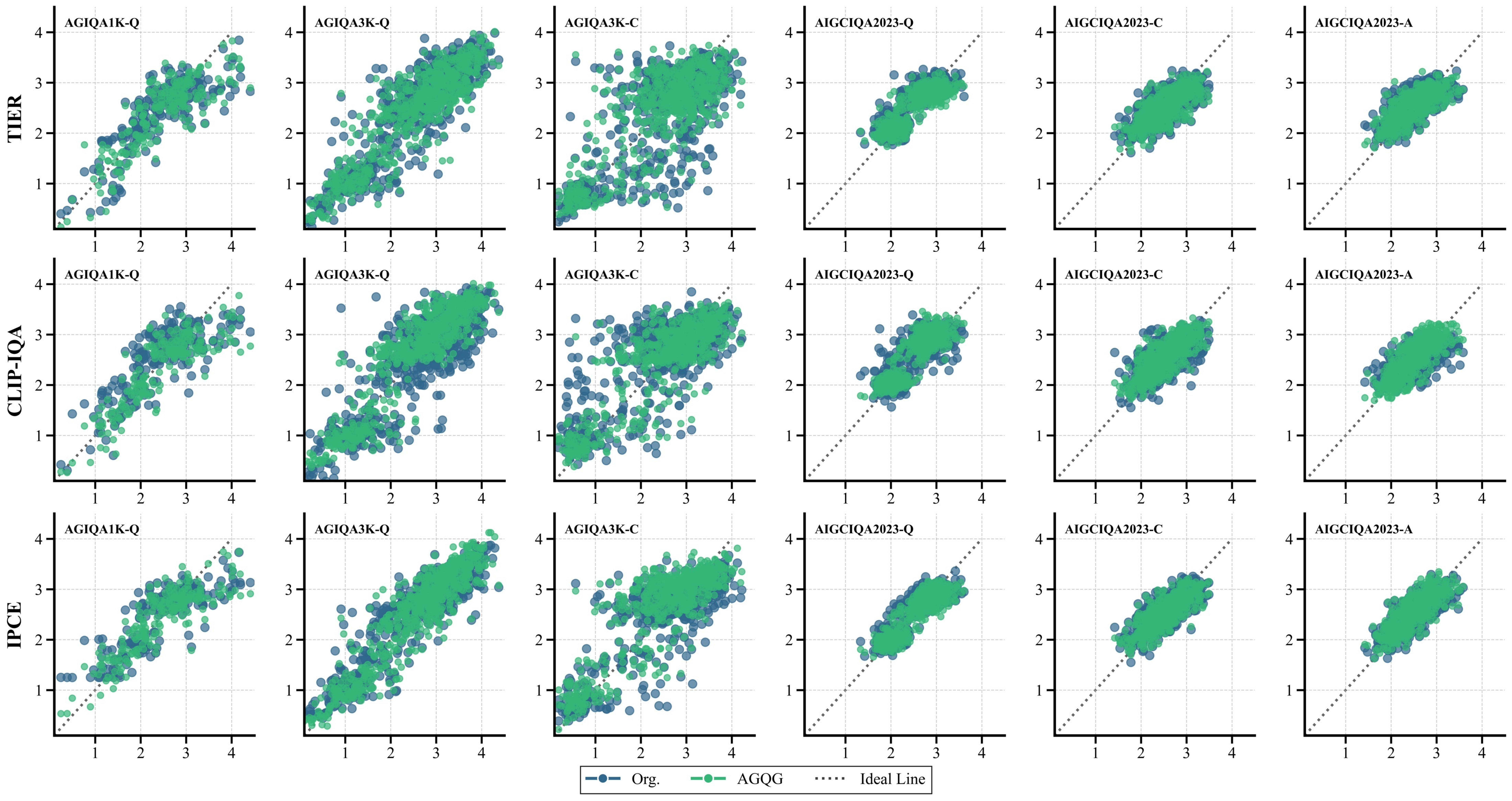}
\vspace{-15pt}
\caption{ Scatter plots of three IQA methods including CLIP-IQA, TIER and IPCE on AGIQA-1K\cite{zhang2023perceptual}, AGIQA-3K\cite{li2023agiqa}, and AIGCIQA2023\cite{wang2023aigciqa2023} dataset. The x-axis represents the ground truth MOS, while the y-axis shows the predicted MOS. The blue scatters represent the results of original, and the green scatters represent the results after the integration of AGRM-QE. Q stands for perception quality, C stands for sematic consistency, and A stands for image authenticity. As the scatter gets closer to the ideal line, it indicates that the model predicts better.}
\label{fig:scatter}
\end{figure*}

\begin{figure}[!t]
\centering
\includegraphics[width=0.5\textwidth]{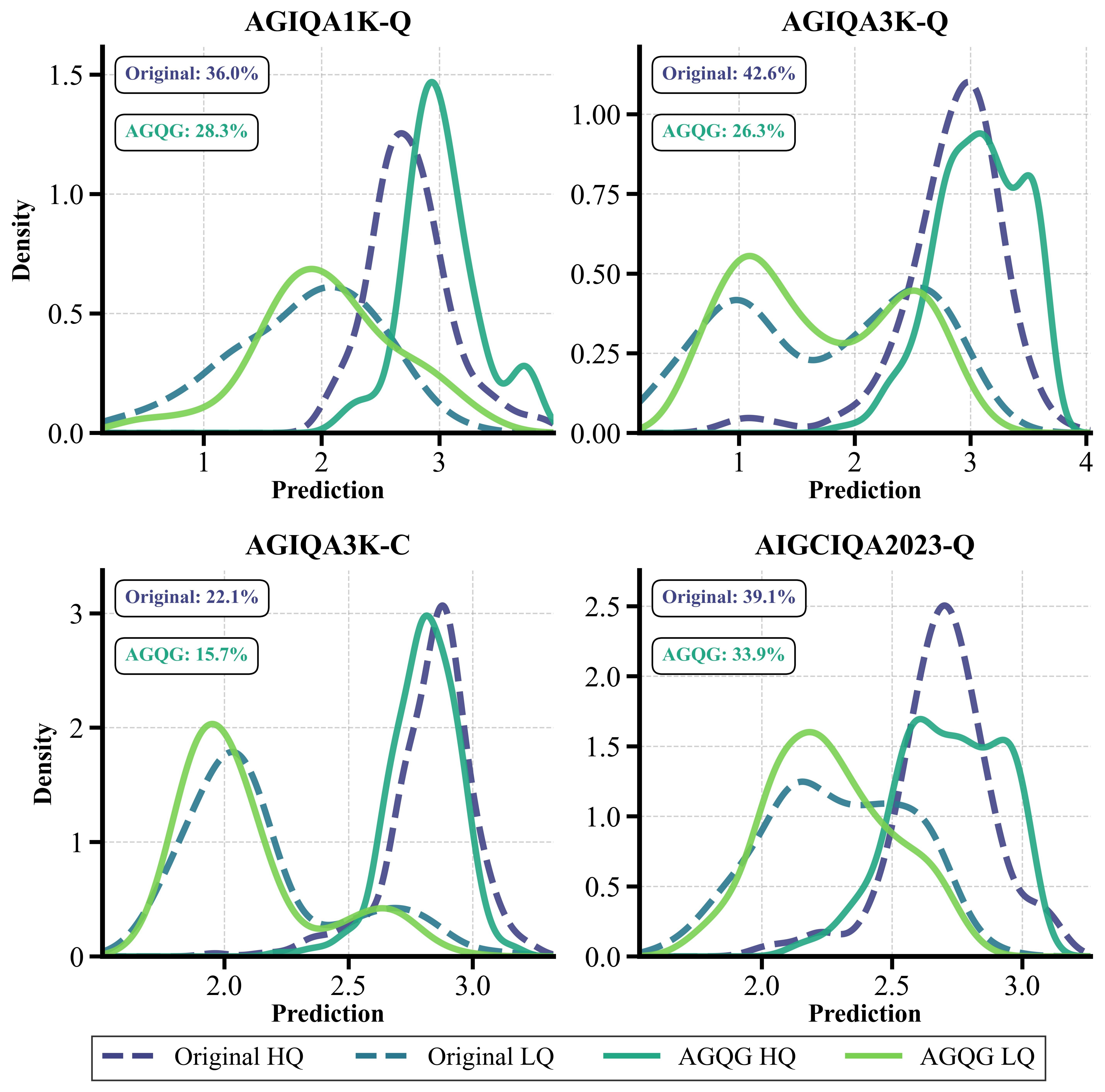}
\vspace{-15pt}
\caption{Density distributions of predicted IQA scores from the original TIER model and its integration with AGQG on AGIQA-1K~\cite{zhang2023perceptual}, AGIQA-3K~\cite{li2023agiqa}, and AIGCIQA2023~\cite{wang2023aigciqa2023} datasets. 
The overlap percentage between high- and low-quality distributions(denoted as HQ and LQ respectively) is annotated in the top-left corner of each subfigure, where lower values indicate better quality discrimination.
}
\label{fig:density}
\end{figure}

\begin{figure*}[!t]
\centering
\includegraphics[width=\textwidth]{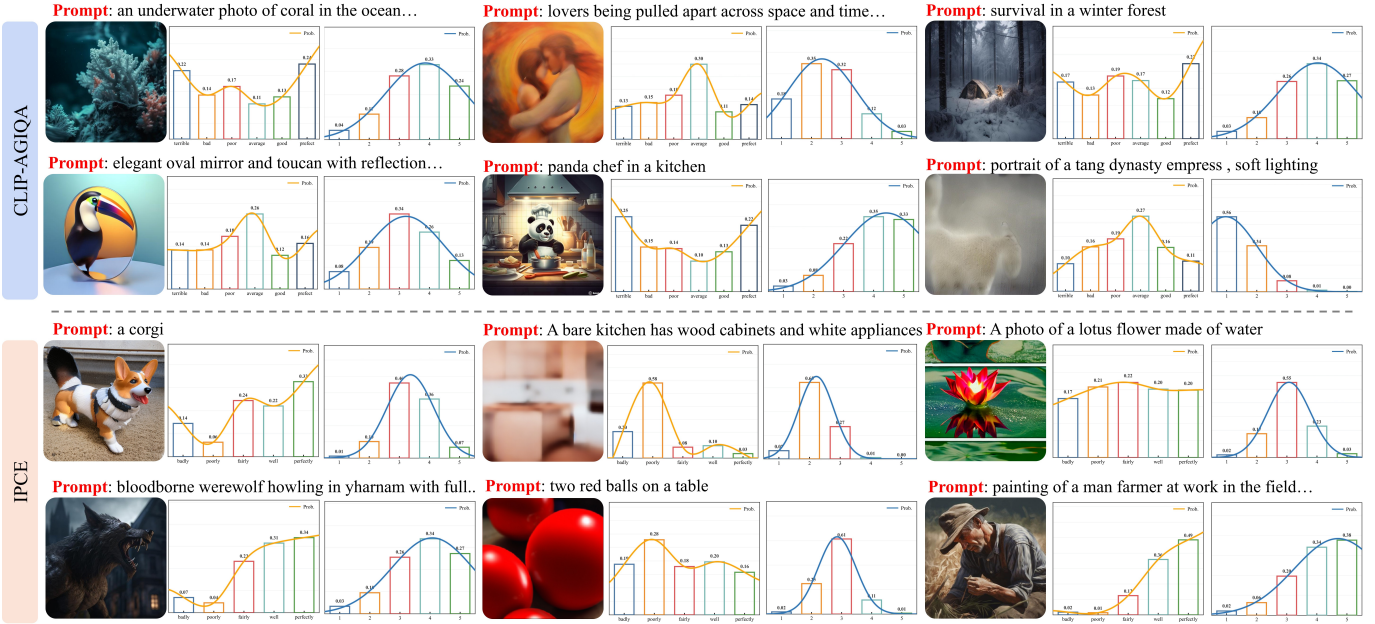}
\vspace{-20pt}
\caption{Visualization of the probability distributions of quality categorization from the original AGIQA models (columns 2, 5, 8) against models enhanced by our AGQG  module (columns 3, 6, 9). Zoomed-in for better details.}
\label{fig:prob}
\end{figure*}

\begin{figure}[!t]
\centering
\includegraphics[width=0.5\textwidth]{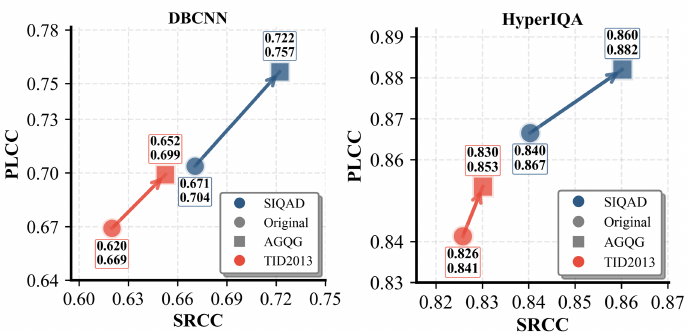}
\vspace{-15pt}
\caption{Impact of AGQG integration on different IQA frameworks, evaluated on the TID2013\cite{ponomarenko2015image} and SIQAD\cite{yang2015perceptual} datasets. Consistent improvements are observed for both DBCNN (left) and HyperIQA (right), with points closer to the top-right corner indicating superior overall performance.}

\label{fig:trad_iqa}
\end{figure}
\subsection{Cross-Dataset Generalization Enhancement}
To further validate the effectiveness of our AGQG  module, we perform cross-dataset evaluations to assess its impact on two cross-dataset settings: 1) training on AIGCIQA2023 and testing on AGIQA-3K (denoted as ``AIGCIQA2023 $\rightarrow$ AGIQA-3K''), and 2) training on AGIQA-3K and testing on AIGCIQA2023 (denoted as ``AGIQA-3K $\rightarrow$ AIGCIQA2023''). The AGQG  module is incorporated into four representative AGIQA models: TIER,CLIP-IQA, CLIP-AGIQA, and IPCE, with performance comparisons summarized in Table~\ref{tab:cross-dataset}.

As shown in the table, all baseline models exhibit notable performance degradation under cross-dataset scenarios, highlighting the challenge of achieving strong generalization. However, consistent performance improvements can be observed across all models when our AGQG  module is incorporated. For instance, on the CLIP-IQA model, AGQG  yields over \textbf{10\%} improvement in both SRCC and PLCC. This gain mainly lies in the adaptive quality standards learned for each image, in contrast to the fixed antonym prompts in the original model, which limit expressiveness for fine-grained quality discrimination. Although IPCE achieves the best performance in both intra- and cross-dataset evaluations, it still suffers a noticeable drop in alignment assessment tasks due to unshared image content and generation artifacts between AIGCIQA2023 and AGIQA-3K datasets. Nevertheless, integrating AGQG  consistently enhances performance, indicating more reliable text-image shared representation space can be learned by our module.

\begin{table*}[htbp]
\centering
\large
\renewcommand{\arraystretch}{1.1}
\caption{Performance comparison results under the cross-dataset setting. ``AGQG +" denotes incorporation of our AGQG  module. The best and second-best results are highlighted in \textbf{bold} and \underline{underline}, respectively.}
\vspace{-10pt}
\label{tab:cross-dataset}
\resizebox{1.0\textwidth}{!}{
\begin{tabular}{l|ccc|ccc|ccc|ccc|c}
\shline
\rowcolor[HTML]{EFEFEF}
\multicolumn{1}{l|}{\cellcolor[HTML]{EFEFEF}} & \multicolumn{6}{c|}{\cellcolor[HTML]{EFEFEF}AIGCIQA2023(Train) $\rightarrow$ AGIQA-3K(Test)} & \multicolumn{6}{c|}{\cellcolor[HTML]{EFEFEF}AGIQA-3K(Train) $\rightarrow$ AIGCIQA2023(Test)} & \multicolumn{1}{c}{\cellcolor[HTML]{EFEFEF}} \\
\hhline{>{\arrayrulecolor[HTML]{EFEFEF}}->{\arrayrulecolor{black}}|------------}
\rowcolor[HTML]{EFEFEF}
\multicolumn{1}{l|}{\cellcolor[HTML]{EFEFEF}} & \multicolumn{3}{c|}{Quality} & \multicolumn{3}{c|}{Consistency} & \multicolumn{3}{c|}{Quality} & \multicolumn{3}{c|}{Consistency} & Avg \\
\hhline{>{\arrayrulecolor[HTML]{EFEFEF}}->{\arrayrulecolor{black}}|------------}
\rowcolor[HTML]{EFEFEF}
\multicolumn{1}{l|}{\multirow{-3}{*}{\cellcolor[HTML]{EFEFEF}\textbf{Methods}}}& SRCC & PLCC & Mean & SRCC & PLCC & Mean & SRCC & PLCC & Mean & SRCC & PLCC & Mean & \\
 \hline \hline
    ResNet50\cite{he2016deep} & 0.576 & 0.612 & 0.594 & 0.473 & 0.523 & 0.498 & 0.599 & 0.609 & 0.604 & 0.432 & 0.435 & 0.433 & 0.533 \\
    ViT-B/32\cite{dosovitskiy2020image} & 0.509 & 0.571 & 0.540 & 0.434 & 0.496 & 0.465 & 0.517 & 0.529 & 0.523 & 0.381 & 0.390 & 0.385 & 0.484 \\
    MUSIQ\cite{ke2021musiq} & 0.635 & 0.673 & 0.654 & 0.394 & 0.437 & 0.416 & 0.650 & 0.643 & 0.646 & 0.525 & 0.515 & 0.520 & 0.576 \\
    DB-CNN\cite{zhang2018blind} & 0.627 & 0.688 & 0.657 & 0.390 & 0.435 & 0.412 & 0.654 & 0.664 & 0.659 & 0.470 & 0.460 & 0.465 & 0.567 \\
    HyperIQA\cite{Su_2020_CVPR} & 0.657 & 0.692 & 0.674 & 0.418 & 0.465 & 0.441 & 0.669 & 0.672 & 0.670 & 0.464 & 0.431 & 0.447 & 0.582 \\
    TReS\cite{golestaneh2022no} & 0.646 & 0.702 & 0.674 & 0.445 & 0.488 & 0.466 & 0.650 & 0.654 & 0.652 & 0.505 & 0.483 & 0.494 & 0.595 \\
    Re-IQA\cite{saha2023re} & 0.473 & 0.352 & 0.412 & 0.243 & 0.154 & 0.198 & 0.654 & 0.654 & 0.654 & 0.479 & 0.484 & 0.481 & 0.486 \\
    AMFF-Net\cite{zhou2024adaptive} & 0.654 & 0.695 & 0.674 & 0.554 & 0.624 & 0.589 & 0.678 & 0.669 & 0.673 & 0.546 & \underline{0.549} & \underline{0.547} & 0.618 \\
    \midrule
    TIER\cite{yuan2024tier} & 0.653 & 0.672 & 0.663 & 0.453 & 0.470 & 0.462 & 0.667 & 0.679 & 0.673 & 0.465 & 0.466 & 0.466 & 0.566 \\
    CLIP-IQA\cite{wang2023exploring}& 0.614 & 0.665 & 0.640 & 0.487 & 0.544 & 0.516 & 0.689 & 0.718 & 0.704 & 0.526 & 0.490 & 0.508 & 0.592 \\
    CLIP-AGIQA\cite{tang2025clip} & 0.660 & 0.679 & 0.670 & - & - & - & 0.562 & 0.569 & 0.566 & - & - & - & 0.618 \\
    IPCE\cite{peng2024aigc}  & 0.752 & \underline{0.809} & \underline{0.781} & \underline{0.655} & \underline{0.730} & \underline{0.693} & \underline{0.764} & 0.755 & \underline{0.760} & \underline{0.608} & \underline{0.602} & \underline{0.605} & 0.709 \\
    \midrule
    AGQG +TIER & 0.671 & 0.716 & 0.694 & 0.476 & 0.500 & 0.488 & 0.687 & 0.698 & 0.693 & 0.500 & 0.478 & 0.489 & 0.591 \\
    \rowcolor[HTML]{FFEEED}
    \textit{Improvement}
  & $\uparrow\best{2.8\%}$  & $\uparrow\best{6.5\%}$  & $\uparrow\best{4.7\%}$  & $\uparrow\best{5.1\%}$  & $\uparrow\best{6.4\%}$  & $\uparrow\best{5.6\%}$  & $\uparrow\best{3.0\%}$  & $\uparrow\best{2.8\%}$  & $\uparrow\best{3.0\%}$  & $\uparrow\best{7.5\%}$ & $\uparrow\best{2.6\%}$  & $\uparrow\best{4.9\%}$ & $\uparrow\best{4.4\%}$ \\
    AGQG +CLIP-IQA & 0.672 & 0.727 & 0.700 & 0.580 & 0.651 & 0.616 & 0.733 & 0.748 & 0.741 & 0.576 & 0.543 & 0.560 & 0.654 \\
    \rowcolor[HTML]{FFEEED}
    \textit{Improvement}
  & $\uparrow\best{9.4\%}$ & $\uparrow\best{9.3\%}$ & $\uparrow\best{9.4\%}$ & $\uparrow\best{19.1\%}$ & $\uparrow\best{19.7\%}$ & $\uparrow\best{19.4\%}$ & $\uparrow\best{6.4\%}$ & $\uparrow\best{4.2\%}$ & $\uparrow\best{5.3\%}$ & $\uparrow\best{9.5\%}$ & $\uparrow\best{10.8\%}$ & $\uparrow\best{10.2\%}$ & $\uparrow\best{10.5\%}$ \\
    AGQG +CLIP-AGIQA & \underline{0.757} & 0.791 & 0.774 & -  & - & - & 0.753 & \underline{0.763} & 0.758 & - & - & - & \textbf{0.766} \\
    \rowcolor[HTML]{FFEEED}
    \textit{Improvement}
  & $\uparrow\best{14.7\%}$ & $\uparrow\best{16.5\%}$ & $\uparrow\best{15.5\%}$ & - & - & - & $\uparrow\best{34.0\%}$ & $\uparrow\best{34.1\%}$ & $\uparrow\best{34.0\%}$ & - & - & - & $\uparrow\best{23.9\%}$ \\
    AGQG +IPCE & \textbf{0.767} & \textbf{0.826} & \textbf{0.797} & \textbf{0.663} & \textbf{0.741} & \textbf{0.702} & \textbf{0.778} & \textbf{0.780} & \textbf{0.779} & \textbf{0.626} & \textbf{0.619} & \textbf{0.623} & \underline{0.725} \\
    \rowcolor[HTML]{FFEEED}
    \textit{Improvement}
  & $\uparrow\best{2.0\%}$  & $\uparrow\best{2.1\%}$  & $\uparrow\best{2.0\%}$  & $\uparrow\best{1.2\%}$ & $\uparrow\best{1.5\%}$ & $\uparrow\best{1.3\%}$ & $\uparrow\best{1.8\%}$  & $\uparrow\best{3.3\%}$  & $\uparrow\best{2.5\%}$  & $\uparrow\best{3.0\%}$  & $\uparrow\best{2.8\%}$ & $\uparrow\best{2.9\%}$ & $\uparrow\best{2.3\%}$ \\
\shline
\end{tabular}}
\end{table*}

\begin{table*}[!t]
\renewcommand{\arraystretch}{1.1}
\centering
\caption{Memory usage, FLOPs, and parameter size for different AGIQA models with and without our AGQG integration. The average SRCC and PLCC performance gains across intra- and cross-dataset settings are also reported for better comparison.}
\vspace{-10pt}
\label{tab:reso_analysis}
\resizebox{0.8\textwidth}{!}{
\begin{tabular}{l|cccc|cccc}
\shline
\rowcolor[HTML]{EFEFEF} 
\multicolumn{1}{c|}{\cellcolor[HTML]{EFEFEF}} & \multicolumn{2}{c}{Intra-datasets} & \multicolumn{2}{c|}{Cross-datasets}   & &  &  \\ 
\hhline{>{\arrayrulecolor[HTML]{EFEFEF}}->{\arrayrulecolor{black}}----|>{\arrayrulecolor{black}}}
\rowcolor[HTML]{EFEFEF} 
\multicolumn{1}{l|}{\multirow{-2}{*}{\cellcolor[HTML]{EFEFEF}Methods}} & SRCC & PLCC & SRCC & PLCC &\multicolumn{1}{l}{\multirow{-2}{*}{\cellcolor[HTML]{EFEFEF}Memory (MB)}} &\multicolumn{1}{l}{\multirow{-2}{*}{\cellcolor[HTML]{EFEFEF}FLOPs (G)}} &\multicolumn{1}{l}{\multirow{-2}{*}{\cellcolor[HTML]{EFEFEF}Parameters (M)}} \\ \hline \hline
TIER\cite{yuan2024tier} & 0.7601 & 0.7937 & 0.5595 & 0.5718  & 522.66 & 38.29 & 136.63 \\
CLIP-IQA\cite{wang2023exploring} & 0.7519 & 0.7703 & 0.5790 & 0.6043  & 389.33 & 24.80 & 48.72 \\
CLIP-AGIQA\cite{tang2025clip} & 0.8154 & 0.8354 & 0.6110 & 0.6240  & 584.83 & 58.30 & 170.29 \\
IPCE\cite{peng2024aigc} & 0.8298 & 0.8612 & 0.6948 & 0.7240 & 337.06 & 36.21 & 84.23 \\
\hline
\rowcolor{mygray}
AGQG +TIER & 0.8267 & 0.8207 & 0.5835 & 0.5980& $524.17$ & $38.30$ & $137.03$ \\
\rowcolor[HTML]{FFEEED}
\textit{Variation} & $\uparrow\best{8.76\%}$ & $\uparrow\best{3.40\%}$ & $\uparrow\best{4.29\%}$ & $\uparrow\best{4.58\%}$  & $\uparrow 0.29\%$ & $\uparrow 0.02\%$ & $\uparrow 0.29\%$ \\
\rowcolor{mygray}
AGQG +CLIP-IQA & 0.8130 & 0.8446 & 0.6403 & 0.6673  & $390.46$ & $24.81$ & $49.02$ \\
\rowcolor[HTML]{FFEEED}
\textit{Variation} & $\uparrow\best{8.13\%}$ & $\uparrow\best{9.65\%}$ & $\uparrow\best{10.59\%}$ & $\uparrow\best{10.43\%}$  & $\uparrow 0.29\%$ & $\uparrow 0.03\%$ & $\uparrow 0.62\%$ \\
\rowcolor{mygray}
AGQG +CLIP-AGIQA & 0.8535 & 0.8735 & 0.7550 & 0.7770  & $573.54$ & $52.83$ & $165.60$ \\
\rowcolor[HTML]{FFEEED}
\textit{Variation} & $\uparrow\best{4.67\%}$ & $\uparrow\best{4.56\%}$ & $\uparrow\best{23.57\%}$ & $\uparrow\best{24.52\%}$   &  $\downarrow 1.93\%$ & $\downarrow 9.38\%$ & $\downarrow 2.75\%$ \\
\rowcolor{mygray}
AGQG +IPCE & 0.8345 & 0.8627 & 0.7085 & 0.7415  & $337.70$ & $28.50$ & $84.39$ \\
\rowcolor[HTML]{FFEEED}
\textit{Variation} & $\uparrow\best{0.57\%}$ & $\uparrow\best{0.17\%}$ & $\uparrow\best{1.97\%}$ & $\uparrow\best{2.42\%}$  & $\uparrow 0.19\%$ & $\downarrow 27.05\%$ & $\uparrow 0.19\%$ \\
\shline
\end{tabular}
}
\end{table*}

\begin{figure*}[!t]
\centering
\includegraphics[width=\textwidth]{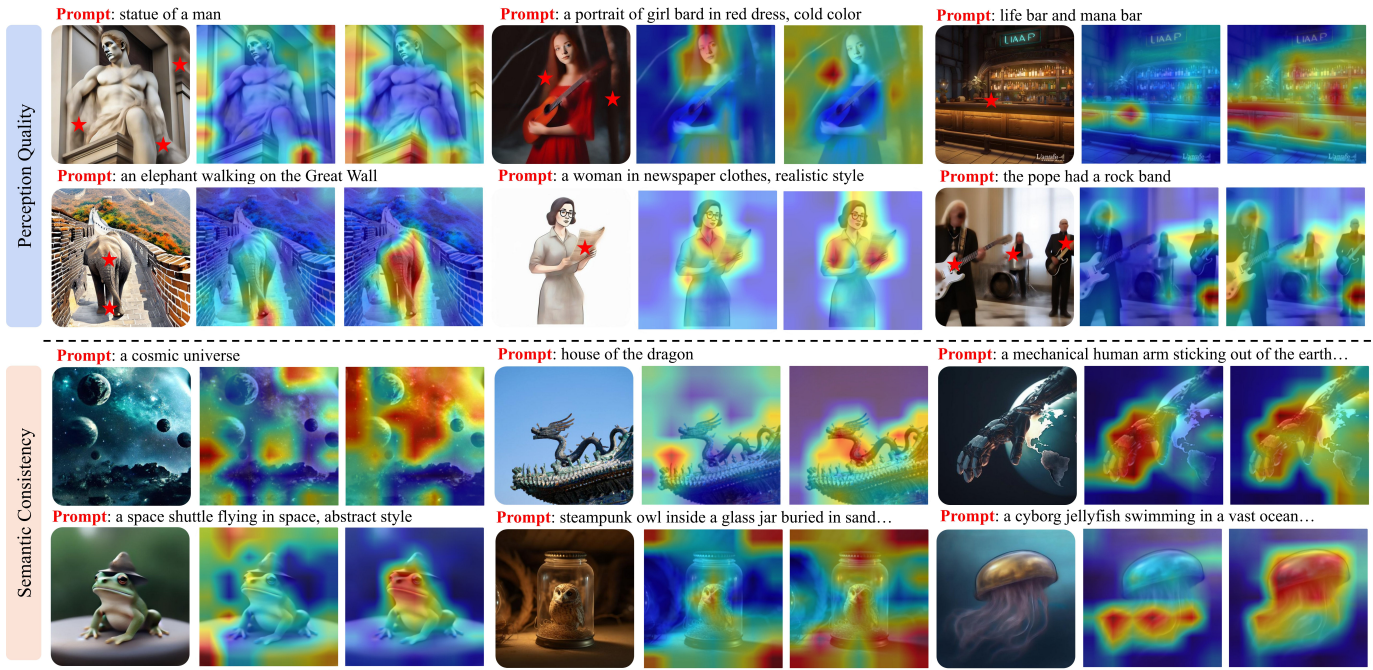}
\vspace{-20pt}
\caption{Visualization of attention maps for the perceptual quality and semantic consistency assessment tasks.  In each row, the 1st, 4th, and 7th columns display the sampled AGIs; the 2nd, 5th, and 8th columns show attention maps produced by the original IPCE model; and the 3rd, 6th, and 9th columns present attention maps from the IPCE model enhanced with our AGQG  module. Red stars (\textcolor{red}{$\bigstar$}) mark regions of significant distortion under the perceptual quality assessment task. Zoomed-in for better detail.}
\label{fig:atten}
\end{figure*}

\subsection{Generalization on Classical IQA Frameworks}
\label{subsec:generalization}

To evaluate the generalization capability of AGQG, we further integrate it into two classical IQA frameworks, DBCNN~\cite{zhang2018blind} and HyperIQA~\cite{Su_2020_CVPR}. We examine the performance improvements across both natural images and screen content images on the TID2013~\cite{ponomarenko2015image} dataset and the SIQAD~\cite{yang2015perceptual} dataset, respectively. The results are presented in Fig.~\ref{fig:trad_iqa}. It can be observed that the incorporation of AGQG leads to consistent performance gains across both architectures. 
On TID2013, ``AGQG+DBCNN" and ``AGQG+HyperIQA" achieve the average SRCC improvements of 2.49\%, with PLCC increases of 2.78\%.  On SIQAD, the average SRCC improves by 4.70\%, with a corresponding average PLCC gain of 4.13\%.
These results demonstrate that AGQG serves as a versatile plug-and-play module capable of enhancing various IQA architectures without substantial structural modification.

\subsection{Resource Efficiency Analysis}
We evaluate the computational efficiency of AGIQA models with and without AGQG integration in terms of memory usage, FLOPs, and parameter size. Table~\ref{tab:reso_analysis} summarizes the results across intra- and cross-dataset evaluations.

The integration of AGQG achieves the best of both worlds: it simultaneously improves prediction performance and reduces computational complexity. For example, for IPCE, FLOPs decrease by 27.05\% while cross-dataset SRCC and PLCC increase by 1.97\% and 2.42\%, respectively. Similarly, CLIP-AGIQA exhibits a 9.38\% reduction in FLOPs and a 2.75\% reduction in parameters, accompanied by the largest cross-dataset performance gains among all methods. These improvements arise from the use of fixed, task-specific textual templates in AGQG, which streamline feature extraction and eliminate the reliance on learned or hand-crafted prompts, while maintaining high prediction accuracy. Across all evaluated frameworks, the parameter overhead introduced by AGQG is negligible, ranging from 0.29\% for TIER to 0.62\% for CLIP-AGIQA, while SRCC and PLCC improvements remain substantial. For instance, CLIP-AGIQA with AGQG achieves over 23\% improvement in cross-dataset SRCC, indicating that performance gains far exceed the minor increments in parameters and inference time. These results demonstrate that AGQG provides a dual advantage of enhanced quality assessment performance and greater computational efficiency, highlighting its effectiveness in both intra- and cross-dataset scenarios.

\subsection{Visualization of Attention Distribution}
To further validate the effectiveness of our AGQG  module, we compare the attention distributions of the advanced IPCE model with and without AGQG  integration, as illustrated in Fig.~\ref{fig:atten}. Specifically, we respectively visualize the spatial attention maps generated during the perceptual quality assessment and prompt alignment tasks. From the figure, we could observe that the IPCE model enhanced with our AGQG  module allocates significantly more attention to distorted regions when evaluating the perceptual quality. For example, the blurred arms of the statue in the image at row 1, column 1, and the unnatural background textures in the image at row 1, column 4. In comparison, these regions are largely overlooked by the original IPCE model. In the prompt alignment task, the introduction of AGQG  also demonstrates a sharper focus on key semantic areas, such as the dragon’s body (row 3, column 6) and the jellyfish with the surrounding seafloor background (row 4, column 9), resulting in more accurate alignment predictions.

\begin{table*}[htbp]
\centering
\renewcommand{\arraystretch}{1.2}
\caption{Performance comparison with different numbers of quality grades.}
\vspace{-10pt}
\label{tab:ablation_rating_levels_and_loss}
\resizebox{1.0\textwidth}{!}{
\begin{tabular}{l|c|cc|cc|cc|cc|cc|cc|cc}
\shline
\rowcolor[HTML]{EFEFEF}
\multicolumn{2}{c|}{\cellcolor[HTML]{EFEFEF}}  
& \multicolumn{6}{c|}{\cellcolor[HTML]{EFEFEF}AIGCIQA2023} 
& \multicolumn{4}{c|}{\cellcolor[HTML]{EFEFEF}AIGCIQA2023 $\rightarrow$ AGIQA-3K} 
& \multicolumn{4}{c}{\cellcolor[HTML]{EFEFEF}AGIQA-3K $\rightarrow$ AIGCIQA2023} \\

\hhline{>{\arrayrulecolor[HTML]{EFEFEF}}-->{\arrayrulecolor{black}}--------------}
\rowcolor[HTML]{EFEFEF}
\multicolumn{2}{c|}{\cellcolor[HTML]{EFEFEF}}  
& \multicolumn{2}{c|}{Quality} 
& \multicolumn{2}{c|}{Consistency} 
& \multicolumn{2}{c|}{Authenticity}
& \multicolumn{2}{c|}{Quality} 
& \multicolumn{2}{c|}{Consistency}
& \multicolumn{2}{c|}{Quality} 
& \multicolumn{2}{c}{Consistency} \\

\hhline{>{\arrayrulecolor[HTML]{EFEFEF}}-->{\arrayrulecolor{black}}--------------}
\rowcolor[HTML]{EFEFEF}
\multicolumn{2}{c|}{\multirow{-3}{*}{\cellcolor[HTML]{EFEFEF}Level}} 
& SRCC & PLCC 
& SRCC & PLCC 
& SRCC & PLCC 
& SRCC & PLCC 
& SRCC & PLCC 
& SRCC & PLCC 
& SRCC & PLCC \\

\hline \hline
(1) & L3  & 0.8665 & 0.8840 & 0.8048 & 0.7960 & 0.8096 & 0.8021 
      & 0.7721 & 0.7678 & 0.5785 & 0.5746  
      & 0.7579 & 0.8167 & 0.6675 & 0.7395 \\

(2) & L7  & 0.8651 & 0.8835 & \textbf{0.8059} & 0.7974 & \textbf{0.8119} & \textbf{0.8035} 
      & 0.7611 & \textbf{0.7852} & \underline{0.5874} & \underline{0.5820}  
      & \textbf{0.7756} & \underline{0.8027} & \underline{0.6725} & \underline{0.7507} \\

(3) & L9  & \underline{0.8670} & \textbf{0.8851} & \underline{0.8056} & \underline{0.7976} & 0.8079 & 0.8008 
      & 0.7559 & 0.7748 & 0.5825 & 0.5724  
      & 0.7522 & 0.7871 & \textbf{0.6755} & \textbf{0.7578} \\

\midrule
\rowcolor[HTML]{FFEEED}
(4) & \textbf{L5 (Ours)} 
      & \textbf{0.8671} & \underline{0.8846} & 0.8052 & \textbf{0.7978} & \underline{0.8115} & \underline{0.8031} 
      & \textbf{0.7784} & \underline{0.7795} & \textbf{0.6261} & \textbf{0.6192}  
      & \underline{0.7666} & \textbf{0.8260} & 0.6633 & 0.7410 \\

\shline
\end{tabular}
}
\end{table*}

\subsection{Ablation Studies}  
The core of the proposed AGQG module lies in the \textit{difficulty estimation branch}. 
Specifically, both image and text features are jointly exploited to estimate the priors $\beta_1$ and $\gamma$, 
which are subsequently refined by a Temperatured TeLU layer to categorize samples into five discrete quality grades. 
To identify the optimal configuration, we perform extensive ablation studies examining three key factors: 
(1) the modality fusion strategy with our proposed image-dependent temperature mechanism, (2) the activation function used in the difficulty branch, and 
(3) the number of quality grades. 
All experiments are conducted based on the strongest baseline, IPCE.\\

\noindent\textbf{Study of Modality Combination for Difficulty Estimation.}  
As illustrated in Fig.~\ref{fig:frame}, both the image and text prompt are utilized for quality difficulty estimation. To evaluate the importance of this modality combination, we investigate three simplified variants:  
(1) \textit{Image-only}: The text feature is ablated, and $F_T$ in Eqn.~(\ref{enphi}) is replaced with $F_I$ for regressing the difficulty parameters $\beta_1$ and $\gamma$;  {(2) \textit{Text-only}: We replace $F_I$ in Eqn.~(\ref{enphi}) with $F_T$ to ablate image feature.} (3) \textit{Temp-abla}: The text-derived difficulty priors $\beta_1^T$ and $\gamma^T$ are directly used as $\beta_1$ and $\gamma$, \aka, the temperature dependence is ablated. The comparison results, presented in Table~\ref{tab:mod}, show that either modality alone provides a performance improvement over the original IPCE model. Nevertheless, the combination of both modalities yields the highest average performance, demonstrating that the integration of visual and textual information leads to more accurate, image-specific difficulty estimation. \\

\begin{figure}[!t]
\centering
\includegraphics[width=0.5 \textwidth]{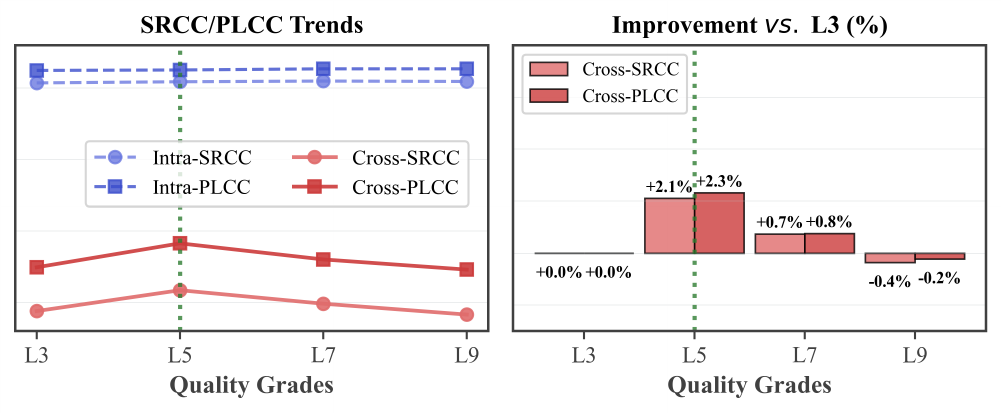}
\vspace{-20pt}
\caption{Left: Performance comparison across different quality grades (L3, L5, L7, L9) for both intra-dataset and cross-dataset settings. Right: Percentage improvement in cross-dataset performance relative to the L3 setting.}
\label{fig:level}
\end{figure}

\begin{table}[htp]
\centering
\setlength{\tabcolsep}{5pt}
\renewcommand{\arraystretch}{1.23}
\caption{Ablation study on modality combinations for difficulty estimation. Perceptual quality and prompt consistency are denoted as Qual. and Cons., respectively.} 
\vspace{-10pt}
\resizebox{0.5\textwidth}{!}{
\begin{tabular}{l|cc|cc|cc|c}
\shline
\rowcolor[HTML]{EFEFEF}
\multicolumn{1}{c|}{\cellcolor[HTML]{EFEFEF}} & \multicolumn{2}{c|}{\cellcolor[HTML]{EFEFEF}AGIQA-1K} & \multicolumn{4}{c|}{\cellcolor[HTML]{EFEFEF}AGIQA-3K$\rightarrow$AIGCIQA2023} & \multicolumn{1}{c}{\cellcolor[HTML]{EFEFEF}} \\
\hhline{>{\arrayrulecolor[HTML]{EFEFEF}}->{\arrayrulecolor{black}}------}
\rowcolor[HTML]{EFEFEF}
\multicolumn{1}{c|}{\cellcolor[HTML]{EFEFEF}} & \multicolumn{2}{c|}{Qual.} & \multicolumn{2}{c|}{Qual.} & \multicolumn{2}{c|}{Cons.} & \multicolumn{1}{c}{\cellcolor[HTML]{EFEFEF}} \\
\hhline{>{\arrayrulecolor[HTML]{EFEFEF}}->{\arrayrulecolor{black}}------}
\rowcolor[HTML]{EFEFEF}
\multicolumn{1}{c|}{\multirow{-3}{*}{\cellcolor[HTML]{EFEFEF}Methods}} & SRCC & PLCC & SRCC & PLCC & SRCC & PLCC & \multicolumn{1}{c}{\multirow{-3}{*}{\cellcolor[HTML]{EFEFEF}Avg.}} \\
\hline \hline
Image-only & 0.8578 & 0.8882 & 0.7582 & 0.7781 & 0.5976 & 0.5949 & \underline{0.7458} \\
Text-only & 0.8595 & \textbf{0.8897} & \underline{0.7651} & \underline{0.7693} & 0.5934 & 0.5927 & 0.7450 \\
Temp-abla & \textbf{0.8612} & \underline{0.8895} & 0.7611 & 0.7637 & 0.5925 & 0.5846 & 0.7421 \\
IPCE~\cite{peng2024aigc} & 0.8535 & 0.8792 & 0.7645 & 0.7553 & \underline{0.6077} & \underline{0.6018} & 0.7437 \\
\midrule
\rowcolor[HTML]{FFEEED}
\textbf{Ours} & \underline{0.8606} & 0.8881 & \textbf{0.7784} & \textbf{0.7795} & \textbf{0.6261} & \textbf{0.6192} & \textbf{0.7587} \\
\shline
\end{tabular}
}
\label{tab:mod}
\end{table}

\noindent\textbf{Study of  Activation Function.}  
\label{sec:activation_ablation}
To validate the design choice of the Temperatured activation function, we compare TeLU against three widely used activations: Sigmoid, ReLU, and Softplus. As shown in Table~\ref{tab:act}, all activations yield competitive results on the in-domain AIGCIQA2023 dataset, with SRCC and PLCC differences within 0.5\%. However, TeLU exhibits clear superiority in the more challenging cross-domain setting (AGIQA-3K~$\rightarrow$~AIGCIQA2023), achieving approximately 6.1\% improvements in both SRCC and PLCC over the second-best baseline. This can be attributed to its unique design: the $\tanh(\exp(\cdot))$ composition provides a smooth, bounded nonlinear transformation that effectively modulates text-derived difficulty priors, enhancing the integration of textual and image-specific cues.\\
 
\begin{table}[htbp]
\centering
\setlength{\tabcolsep}{4pt}
\renewcommand{\arraystretch}{1.25}
\caption{Performance comparison under different activation function.}
\vspace{-10pt}
\label{tab:act}
\resizebox{0.5\textwidth}{!}{
\begin{tabular}{l|cc|cc|cc|cc}
\shline
\rowcolor[HTML]{EFEFEF}
\multicolumn{1}{c|}{\cellcolor[HTML]{EFEFEF}}  & \multicolumn{4}{c|}{\cellcolor[HTML]{EFEFEF}AIGCIQA2023} &\multicolumn{4}{c}{\cellcolor[HTML]{EFEFEF}AGIQA-3K$\rightarrow$ AIGCIQA2023}\\

\hhline{>{\arrayrulecolor[HTML]{EFEFEF}}->{\arrayrulecolor{black}}--------}
\rowcolor[HTML]{EFEFEF}
\multicolumn{1}{c|}{\cellcolor[HTML]{EFEFEF}} & \multicolumn{2}{c|}{Qual.} & \multicolumn{2}{c|}{Cons.} & \multicolumn{2}{c|}{Qual.} & \multicolumn{2}{c}{Cons.}  \\

\hhline{>{\arrayrulecolor[HTML]{EFEFEF}}->{\arrayrulecolor{black}}--------}
\rowcolor[HTML]{EFEFEF}
\multicolumn{1}{c|}{\multirow{-3}{*}{\cellcolor[HTML]{EFEFEF}Act.}} & SRCC & PLCC & SRCC & PLCC & SRCC & PLCC & SRCC & PLCC  \\
\hline \hline
Sigmoid  & \underline{0.8678} & \textbf{0.8863} & \underline{0.8065} & \underline{0.7987} & 0.7583 & 0.7657 & \underline{0.5900} & \underline{0.5834} \\
ReLU  & 0.8658 & 0.8832 & 0.8042 & 0.7967 & 0.7572 & \underline{0.7788} & 0.5865 & 0.5791 \\
Softplus  & \textbf{0.8679} & \underline{0.8861} & \textbf{0.8082} & \textbf{0.7992} & \underline{0.7780} & 0.7787 & 0.5762 & 0.5778 \\
\midrule
\rowcolor[HTML]{FFEEED}
\textbf{Ours}  & 0.8671 & 0.8846 & 0.8052 & 0.7978 & \textbf{0.7784} & \textbf{0.7795} & \textbf{0.6261} & \textbf{0.6192} \\
\shline
\end{tabular}
}
\end{table}

\noindent\textbf{Study of the Number of Quality Grades.} In our AGQG  module, five quality grades ($\beta_1$, $\beta_2$, ..., $\beta_4$) are used for quality categorization. To investigate the optimal number of quality levels, we experiment with three alternative configurations: three, seven, and nine grades, denoted as L3, L7, and L9, respectively, alongside the original five-grade setting (L5). These configurations are evaluated in terms of their impact on model performance across three key criteria: perceptual quality, prompt correspondence and image authenticity on the AGIQA2023. Besides we evaluate model generalization explored by training on one dataset and testing on another between AGIQA-3k and AGIQA2023. The results are summarized in Table~\ref{tab:ablation_rating_levels_and_loss} and Fig.~\ref{fig:level}. The comparison results reveal that the L5 setting achieves the highest PLCC for the perceptual quality on AIGCIQA2023 and presents the best generalization ability when training on AGIQA-3k. Although L3 and L7 slightly outperform L5 in certain quality aspects, L5 yields the highest average performance across all evaluation criteria. Increasing the number of quality levels from L3 to L5 leads to notable cross-dataset performance gains. In contrast, further increasing granularity (e.g., L7 or L9) provides only marginal improvements in intra-dataset performance while simultaneously impairing cross-domain generalization. These observations suggest that L5 represents a balanced trade-off between assessment granularity and model complexity. Moreover, this setting aligns well with real-world scoring practices\cite{series2012methodology}, where human evaluators typically rely on a limited set of discrete quality levels to ensure both interpretability and cognitive efficiency.

\section{Conclusion}
In this paper, we introduce AGQG, a novel quality-level categorization module for AI-generated image quality assessment, drawing inspiration from the Graded Response Model in psychometrics. Our approach tackles the persistent challenge of semantic drift in existing AGIQA frameworks, where unreliable text–image embedding alignment often leads to unstable, multi-modal prediction behaviors. By decoupling image quality assessment into image ability and difficulty modeling and incorporating a mathematically principled monotonic design, AGQG achieves stable and interpretable predictions that are better aligned with human perceptual judgment. Comprehensive experiments across multiple state-of-the-art AGIQA models and diverse datasets demonstrate that AGQG functions as a plug-and-play enhancement, delivering consistent performance improvements without imposing substantial computational overhead. Notably, its strong cross-dataset generalization highlights its capability to transcend dataset-specific biases, positioning AGQG as a foundational building block for future AGIQA system design. This work thus bridges psychometric theory and vision-language modeling, paving the way for more reliable, interpretable, and human-centric evaluation of AI-generated visual content.

\proofsection{Theoretical Proofs}\label{sec:proofs}

\section*{Unimodal Distribution Analysis}
\label{sec:unimodal_theorem}

We formally establish the unimodality of the AGRM category probabilities through a sequence of carefully constructed lemmas culminating in our main theorem. Our theoretical investigation begins by proving the fundamental shift property of intermediate category probabilities, which emerges directly from the arithmetic progression of difficulty parameters. We further introduce the critical threshold condition \(\gamma > \frac{2 \ln 2}{D \alpha}\) to properly constrain the intersection positions of boundary category probabilities, ensuring they do not disrupt the global unimodal structure. Finally, we systematically integrate these results to demonstrate that the complete probability distribution maintains strict unimodality across all ability levels, with a unique modal category and monotonic decay on both sides

\subsection*{Definitions}
Our AGRM formally defines the category probability for \(k \in \{1,\dots,n\}\) as
\begin{equation}
\label{eq:pk_def}
P_k(\theta) = 
\begin{cases}
1 - \sigma(D\alpha(\theta-\beta_1)), & k=1,\\
\sigma(D\alpha(\theta-\beta_{k-1})) - \sigma(D\alpha(\theta-\beta_k)), & 2 \le k \le n-1,\\
\sigma(D\alpha(\theta-\beta_{n-1})), & k=n,
\end{cases}
\end{equation}
with logistic function $\sigma(x) = 1/(1+e^{-x})$, $D,\alpha > 0$ are scaling parameters, and $\beta_k = \beta_1 + (k-1)\gamma$ with \(\gamma > \frac{2 \ln 2}{D \alpha}\).

\subsection*{Lemmas}

\begin{lemma}[Peak of Intermediate Categories]
\label{lem:peak}
For \(2 \le k \le n-1\), \(P_k(\theta)\) attains a unique maximum at
\[
\theta^*_k = \frac{\beta_{k-1} + \beta_k}{2}.
\]
Moreover, \(P_k(\theta)\) is strictly increasing for \(\theta < \theta^*_k\) and strictly decreasing for \(\theta > \theta^*_k\).
\end{lemma}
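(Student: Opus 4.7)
The plan is to differentiate $P_k(\theta)$ with respect to $\theta$ and exploit the symmetry of the logistic density $\sigma'$ to locate the critical point and then classify its type. Writing $g(\theta) = \sigma(D\alpha(\theta-\beta_{k-1})) - \sigma(D\alpha(\theta-\beta_k))$, direct differentiation yields
\begin{equation*}
g'(\theta) = D\alpha\bigl[\sigma'(D\alpha(\theta-\beta_{k-1})) - \sigma'(D\alpha(\theta-\beta_k))\bigr],
\end{equation*}
so, since $D\alpha>0$, the sign of $g'$ agrees with the sign of the bracket.

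Next I would invoke two standard properties of the logistic density $\sigma'(x)=\sigma(x)(1-\sigma(x))$: (i) $\sigma'$ is even, i.e.\ $\sigma'(x)=\sigma'(-x)$, and (ii) $\sigma'$ is strictly decreasing on $[0,\infty)$. Property (i) forces any solution of $\sigma'(D\alpha(\theta-\beta_{k-1}))=\sigma'(D\alpha(\theta-\beta_k))$ to satisfy either $\theta-\beta_{k-1}=\theta-\beta_k$, which is ruled out because $\beta_{k-1}<\beta_k$, or $\theta-\beta_{k-1}=-(\theta-\beta_k)$, which yields the unique critical point $\theta^*_k = (\beta_{k-1}+\beta_k)/2$.

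Classification of this critical point follows from comparing $|\theta-\beta_{k-1}|$ and $|\theta-\beta_k|$. For $\theta<\theta^*_k$, the abscissa $\theta$ lies strictly closer to $\beta_{k-1}$ than to $\beta_k$, so by property (ii) the first summand in the bracket strictly dominates the second and $g'(\theta)>0$; an entirely symmetric argument gives $g'(\theta)<0$ for $\theta>\theta^*_k$. This simultaneously establishes that $\theta^*_k$ is the unique maximizer and that $P_k$ is strictly increasing on $(-\infty,\theta^*_k)$ and strictly decreasing on $(\theta^*_k,\infty)$.

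The main obstacle here is not analytical — the calculations are routine once the evenness and strict bell shape of $\sigma'$ are invoked — but rather bookkeeping: one must be careful that the arithmetic constraint $\beta_k-\beta_{k-1}=\gamma>0$ is used only to exclude the spurious solution $\theta-\beta_{k-1}=\theta-\beta_k$, whereas the tighter bound $\gamma>2\ln 2/(D\alpha)$ plays no role in this lemma and should be reserved for the subsequent lemmas that handle the boundary categories $k=1$ and $k=n$ and the separation between the modes of neighboring $P_k$'s.
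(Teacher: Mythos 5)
Your proof is correct and follows essentially the same route as the paper's: differentiate $P_k$, use the symmetry of the logistic density $\sigma'$ to locate the critical point at the midpoint $(\beta_{k-1}+\beta_k)/2$, and determine the sign of the derivative on either side. In fact your write-up is more complete than the paper's, which asserts the midpoint and the sign pattern without explicitly invoking the evenness of $\sigma'$ and its strict decrease on $[0,\infty)$ (and drops the harmless chain-rule factor $D\alpha$); your remark that the bound $\gamma>2\ln 2/(D\alpha)$ is not needed here is also accurate.
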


\begin{proof}
Taking derivative:
\[
P_k'(\theta) = \sigma'(D\alpha(\theta-\beta_{k-1})) - \sigma'(D\alpha(\theta-\beta_k)).
\]
Setting \( P_k'(\theta) = 0 \) yields the condition \( \sigma'(D\alpha(\theta^* - \beta_{k-1})) = \sigma'(D\alpha(\theta^* - \beta_k)) \), which implies that the critical point satisfies \( \theta_k^* = (\beta_{k-1} + \beta_k)/2 \). When \( \theta' < \theta^* \), we have \( P_k' > 0 \), indicating that \( P_k(\theta) \) is strictly increasing on this interval; conversely, when \( \theta' > \theta^* \), \( P_k' < 0 \), implying that \( P_k(\theta) \) is monotonically decreasing. This opposite monotonic behavior of \( P_k'(\theta) \) on either side of \( \theta^* \) ensures the existence of a unique peak for \( P_k(\theta) \) at this critical point.

\end{proof}

\begin{figure}[!t]
\centering
\includegraphics[width=0.5\textwidth]{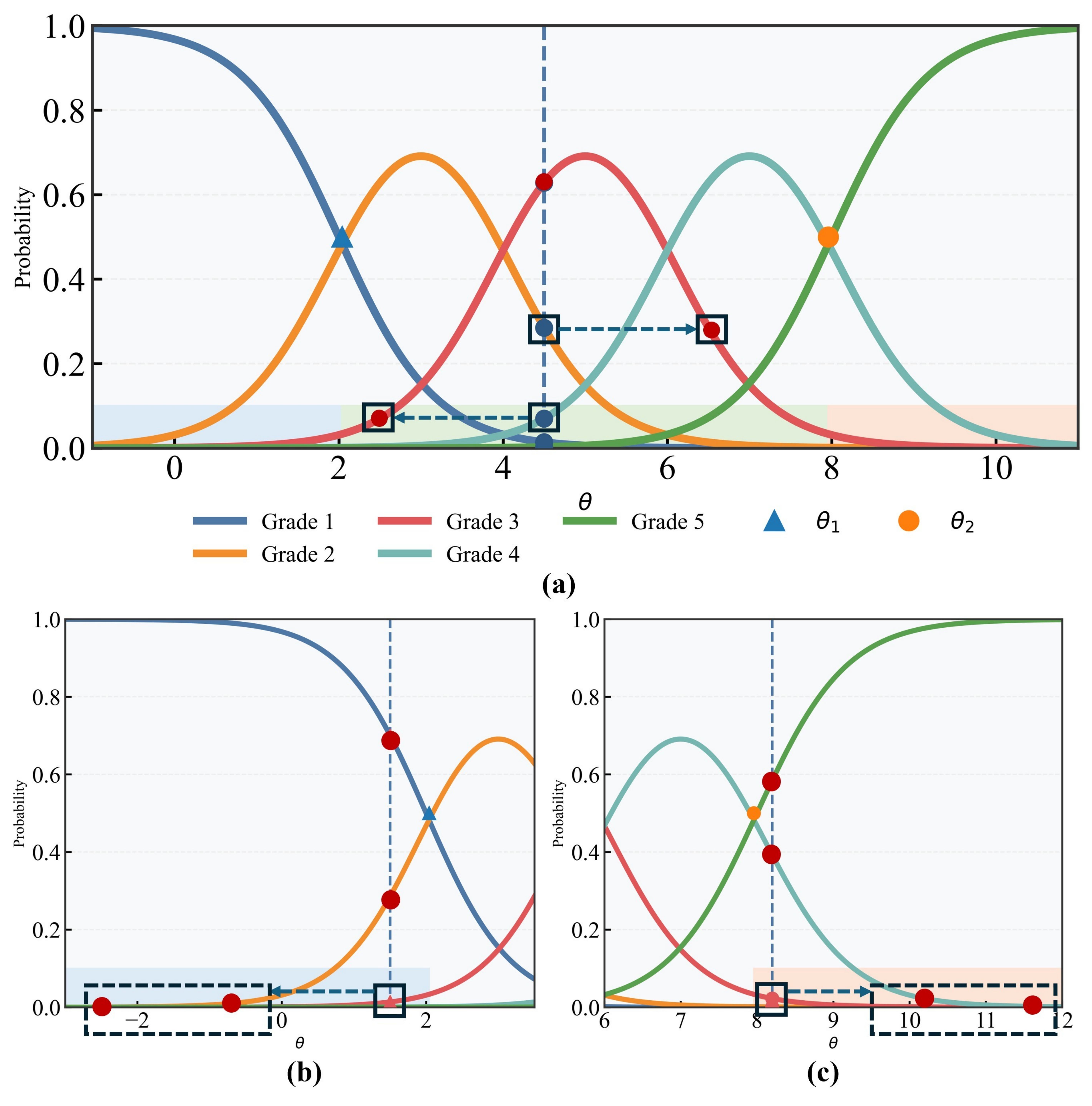}
\vspace{-20pt}
\caption{The domain of \( \theta \) can be divided into three parts \( (-\infty, \theta_1) \), \( (\theta_1, \theta_2) \), and \( (\theta_2, +\infty) \). (a) According to Lemma~\ref{lem:shift}, for \(2 \leq k \leq n-1\), \(P_{k+1}\) corresponds to a rightward translation of \(P_k\) by \(\gamma\); as illustrated by the curves, these two sets of points (\(P_k\) and \(P_{k+1}\)) are mutually convertible. (b) For $\theta < \theta_{1}$, $P_{1}$ attains the maximum over the family $\{P_{k}\}_{k=1}^{n}$, while the category probabilitie in $\{P_{k}\}_{k=3}^{n-1}$ can be transformed into $P_{n-1}$ through a rigid leftward translation.  (c) For $\theta > \theta_{2}$, $P_{n}$ attains the maximum, and each probability $P_{k}$ with $k = 2, \dots, n-2$  can be converted into $P_{2}$ by the aforementioned translation.}
\label{fig:append}
\end{figure}

\begin{lemma}[Shift Property]
\label{lem:shift}
For $2 \le k \le n-2$, the category probabilities(as illustrated in Fig.~\ref{fig:append}(a)) satisfy:
\[
P_{k+1}(\theta) = P_k(\theta - \gamma),
\]
indicating that $P_{k+1}$ is a rightward translation of $P_k$ by $\gamma$. The intersection point of \(P_k\) and \(P_{k+1}\) occurs exactly \(\gamma\) apart from the previous intersection. {Formally, the intersection point of $P_k$ and $P_{k+1}$ is denoted as $x_k$ and $x_{k+1}-x_k=\gamma$.}
\end{lemma}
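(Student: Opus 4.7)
The lemma has two parts: (i) a rigid-translation identity $P_{k+1}(\theta)=P_k(\theta-\gamma)$ for the intermediate categories, and (ii) a corollary that the successive intersection points of adjacent $P_k$ curves are themselves spaced by $\gamma$. Both follow in a few lines directly from the definition of $P_k$ in Eqn.~\eqref{eq:pk_def} together with the arithmetic structure $\beta_k=\beta_1+(k-1)\gamma$. The approach is purely algebraic; I do not expect any genuine obstacle, so the write-up is essentially bookkeeping.

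\textbf{Step 1 (the shift identity).} Since $2\le k\le n-2$, both $P_k$ and $P_{k+1}$ fall into the middle branch of Eqn.~\eqref{eq:pk_def}. I would start from
\[
P_{k+1}(\theta)=\sigma\!\bigl(D\alpha(\theta-\beta_k)\bigr)-\sigma\!\bigl(D\alpha(\theta-\beta_{k+1})\bigr),
\]
then rewrite each threshold using $\beta_k=\beta_{k-1}+\gamma$ and $\beta_{k+1}=\beta_k+\gamma$, so that $\theta-\beta_k=(\theta-\gamma)-\beta_{k-1}$ and $\theta-\beta_{k+1}=(\theta-\gamma)-\beta_k$. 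Substituting back gives exactly $P_k(\theta-\gamma)$, which is the desired identity. The derivation does not use $\gamma>2\ln 2/(D\alpha)$; it relies only on the arithmetic spacing.

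\textbf{Step 2 (intersection spacing).} Let $x_k$ be defined by $P_k(x_k)=P_{k+1}(x_k)$ and $x_{k+1}$ by $P_{k+1}(x_{k+1})=P_{k+2}(x_{k+1})$. Applying Step 1 twice, $P_{k+1}(\theta)=P_k(\theta-\gamma)$ and $P_{k+2}(\theta)=P_{k+1}(\theta-\gamma)$, so setting $\theta=x_k+\gamma$ yields $P_{k+1}(x_k+\gamma)=P_k(x_k)$ and $P_{k+2}(x_k+\gamma)=P_{k+1}(x_k)$. The defining equality at $x_k$ then forces $P_{k+1}(x_k+\gamma)=P_{k+2}(x_k+\gamma)$, so $x_k+\gamma$ is an intersection point of $P_{k+1}$ and $P_{k+2}$.

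\textbf{Step 3 (uniqueness, the only subtle point).} To conclude $x_{k+1}=x_k+\gamma$ rather than merely ``an'' intersection, I need uniqueness of the crossing point of two adjacent middle-category curves. This is the one place where a small argument is required: by Lemma~\ref{lem:peak} each $P_k$ is strictly unimodal with peak at $\tfrac12(\beta_{k-1}+\beta_k)$, and Step 1 shows $P_{k+1}$ is the rightward $\gamma$-shift of $P_k$, so the difference $P_k-P_{k+1}$ is strictly positive on the left tail and strictly negative on the right tail, with a single sign change. This gives a unique $x_k$ per adjacent pair and closes the argument. I do not anticipate needing the threshold condition $\gamma>2\ln 2/(D\alpha)$ here; that hypothesis is reserved for the later lemmas that control the boundary categories $P_1$ and $P_n$.
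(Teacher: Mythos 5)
Your Step 1 is exactly the paper's proof: rewrite $\theta-\beta_k=(\theta-\gamma)-\beta_{k-1}$ and $\theta-\beta_{k+1}=(\theta-\gamma)-\beta_k$ using the arithmetic spacing, and read off $P_{k+1}(\theta)=P_k(\theta-\gamma)$. You are also right that this uses only the arithmetic structure and not the threshold condition $\gamma>2\ln 2/(D\alpha)$. Where you go beyond the paper is in Steps 2--3: the paper asserts the intersection-spacing claim $x_{k+1}-x_k=\gamma$ inside the lemma statement but its proof stops after the shift identity, whereas you actually derive existence of the shifted intersection by applying the identity twice and then argue uniqueness of each crossing from the strict unimodality of Lemma~\ref{lem:peak} combined with the shift (so that $P_k-P_{k+1}$ is positive up to $\theta_k^*$, negative beyond $\theta_k^*+\gamma$, and strictly decreasing in between, forcing a single zero). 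That uniqueness step is the one piece of content the paper leaves implicit, and your sketch of it is correct and easily made fully rigorous.
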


\begin{proof}
By definition,
\[
P_{k+1}(\theta) = \sigma(D\alpha(\theta-\beta_k)) - \sigma(D\alpha(\theta-\beta_{k+1})).
\]
Since \(\beta_{k+1} = \beta_k + \gamma\), we can rewrite:
\begin{equation*}
\begin{aligned}
P_{k+1}(\theta) &= \sigma(D\alpha((\theta-\gamma)-\beta_{k-1})) - \sigma(D\alpha((\theta-\gamma)-\beta_k)) \\
                &= P_k(\theta-\gamma).
\end{aligned}
\end{equation*}
\end{proof}

\begin{corollary}[Unimodal Structure Induced by the Shift Property]
\label{cor:unimodal}
If the shift property \( P_{k+1}(\theta) = P_k(\theta - \gamma) \) holds for \( 2 \le k \le n-1 \), then the sequence \(\{P_k(\theta)\}\) exhibits a strictly unimodal structure:
\begin{equation*}
P_2 < P_i < P_j < P_k > P_l > P_m, \quad 2 \le i < j < k < l < m \le n-1.
\end{equation*}
\end{corollary}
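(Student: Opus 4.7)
The plan is to exploit the shift property of Lemma~\ref{lem:shift} together with the intrinsic symmetry of each intermediate $P_k$ (implicit in Lemma~\ref{lem:peak}) to reduce the unimodality of $\{P_k(\theta)\}_{k=2}^{n-1}$, for any fixed $\theta$, to a single monotonicity criterion dictated by the position of $\theta$ within the arithmetic progression $\{\beta_k\}$.

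First, writing $u = \theta - \theta_k^*$ with $\theta_k^* = (\beta_{k-1}+\beta_k)/2$, I would express
\begin{equation*}
P_k(\theta) \;=\; \sigma\!\big(D\alpha(u+\gamma/2)\big) - \sigma\!\big(D\alpha(u-\gamma/2)\big),
\end{equation*}
and invoke the identity $\sigma(-x)=1-\sigma(x)$ to obtain $P_k(\theta_k^*+u)=P_k(\theta_k^*-u)$. Combined with Lemma~\ref{lem:peak}, this shows each intermediate $P_k$ is strictly unimodal and symmetric about its peak $\theta_k^*$.

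Next, for fixed $\theta$ I would compare consecutive categories via Lemma~\ref{lem:shift}: $P_{k+1}(\theta)-P_k(\theta)=P_k(\theta-\gamma)-P_k(\theta)$. Symmetry about $\theta_k^*$ together with strict unimodality reduces the sign of this difference to comparing $|\theta-\gamma-\theta_k^*|$ with $|\theta-\theta_k^*|$, whose crossover is exactly the midpoint $\theta-\gamma/2=\theta_k^*$. This gives the clean rule
\begin{equation*}
P_{k+1}(\theta) > P_k(\theta) \iff \theta > \theta_k^* + \gamma/2 = \beta_k,
\end{equation*}
with the reverse strict inequality for $\theta<\beta_k$.

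Since $\{\beta_k\}$ is strictly increasing, for every generic $\theta$ there is a unique index $k^\star\in\{2,\dots,n-1\}$ with $\beta_{k^\star-1}<\theta<\beta_{k^\star}$; the rule above then forces $P_k(\theta)<P_{k+1}(\theta)$ strictly for $k<k^\star$ and $P_{k+1}(\theta)<P_k(\theta)$ strictly for $k\ge k^\star$, yielding the strictly unimodal chain $P_2<P_i<P_j<P_{k^\star}>P_l>P_m$ asserted in the corollary. The main obstacle I anticipate is the straddling case $\theta-\gamma<\theta_k^*<\theta$ in the comparison step, where the crossover threshold must be pinned down precisely and strictness verified; the remaining two position cases reduce to plain monotonicity of $P_k$ on one side of its peak. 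Notably, the threshold bound $\gamma>2\ln 2/(D\alpha)$ is not needed here, since it governs only the boundary categories $P_1$ and $P_n$ that lie outside the intermediate range considered in this corollary.
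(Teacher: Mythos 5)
Your proposal is correct, and it reaches the conclusion by a sharper mechanism than the paper's own proof. Both arguments rest on the same two ingredients---the shift property of Lemma~\ref{lem:shift} and the single-bump behaviour of each intermediate $P_k$ from Lemma~\ref{lem:peak}---but the paper argues qualitatively: it fixes the maximizing index and asserts that indices on either side are ordered because the translated curves ``lie in the increasing (resp.\ decreasing) interval,'' leaving the straddling comparison implicit. You instead make the key step exact: the symmetry $P_k(\theta_k^*+u)=P_k(\theta_k^*-u)$ (a correct consequence of $\sigma(-x)=1-\sigma(x)$, which the paper never states) turns the consecutive comparison $P_{k+1}(\theta)$ vs.\ $P_k(\theta)=P_{k+1}(\theta+\gamma)$ into a distance-to-peak comparison, yielding the clean criterion $P_{k+1}(\theta)>P_k(\theta)\iff\theta>\beta_k$; monotonicity of $\{\beta_k\}$ then forces at most one sign change in the consecutive differences, which is precisely strict unimodality. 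This buys you two things the paper's proof lacks: an explicit location of each crossover (at $\theta=\beta_k$, so ties occur only on the measure-zero set $\theta\in\{\beta_k\}$, which you correctly flag via ``generic $\theta$'' while the paper silently claims strictness everywhere), and a correct observation that the threshold $\gamma>2\ln 2/(D\alpha)$ plays no role in this corollary. The only cosmetic caveat is that your localization $\beta_{k^\star-1}<\theta<\beta_{k^\star}$ does not literally apply when $\theta<\beta_2$ or $\theta>\beta_{n-2}$ (the peak then sits at the boundary index $2$ or $n-1$ of the intermediate range), but your sign-change rule already covers those cases, and the same degeneracy is present in the corollary's own statement.
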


\begin{proof}
Let \( k \) denote the category where \( P_k(\theta) \) attains its maximum value, and consider indices satisfying \( 2 \le i < j < k < l < m \le n-1 \).

\begin{itemize}
    \item For \( i \) and \( j \), since they lie to the left of the peak (in the increasing interval), and given that \( P_j(\theta) = P_i(\theta - (j - i)\Delta) \), it follows that \( P_i < P_j < P_k \).
    
    \item For \( l \) and \( m \), since they lie to the right of the peak (in the decreasing interval), and \( P_m(\theta) = P_l(\theta - (m - l)\Delta) \), we obtain \( P_m < P_l < P_k \).
\end{itemize}
Therefore, the shift property induces a strictly unimodal configuration. Geometrically, it implies that the entire family of interior category probability curves \(\{P_k\}_{k=2}^{n-1}\) can be generated by horizontally translating a single prototype curve by integer multiples of \(\gamma\).
\end{proof}

\begin{lemma}[Boundary Intersections]
\label{lem:boundary}
Define
\[
\theta_1 = \beta_1 - \frac{1}{D\alpha} \ln(1 - 2 e^{-D\alpha\gamma}), \quad
\theta_2 = \beta_{n-2} + \frac{1}{D\alpha} \ln(e^{D\alpha\gamma} - 2).
\]
If \(\gamma > \frac{2 \ln 2}{D \alpha}\), then
\[
\theta_1 < \theta^*_2, \quad \theta_2 > \theta^*_{n-1},
\]
{ensuring that the intersection point \( \theta_1 \) of $P_1 = P_2$ occurs before the maximum point of $P_2$, and the intersection point \( \theta_2 \) of  $P_n$ and $P_{n-1}$ occurs after the maximum point of $P_n$.}
\end{lemma}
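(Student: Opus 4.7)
The plan is to reduce both inequalities in the lemma to the same algebraic condition on $u := e^{D\alpha\gamma/2}$, and then to show that the threshold $\gamma > 2\ln 2/(D\alpha)$ is exactly what is required. Before attacking the inequalities themselves, I would first verify that the quantities $\theta_1$ and $\theta_2$ defined in the statement are indeed the boundary intersection points claimed in the informal gloss; this anchors the expressions to the probability curves of Eqn.~(\ref{eq:pk_def}) and legitimizes the subsequent geometric interpretation used in Lemma~\ref{lem:shift} and Corollary~\ref{cor:unimodal}.

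For the intersection verification, I would set $P_1(\theta)=P_2(\theta)$, write $a=D\alpha(\theta-\beta_1)$ and $b=a-D\alpha\gamma$, and manipulate $1+\sigma(b)=2\sigma(a)$ to reach $e^{a}(1-2e^{-D\alpha\gamma})=1$, which solves to $\theta=\beta_1-\frac{1}{D\alpha}\ln(1-2e^{-D\alpha\gamma})=\theta_1$. Observe that for this expression to be real we already need $e^{D\alpha\gamma}>2$, a weaker condition than the one in the hypothesis. A symmetric computation (or, more elegantly, the reflection $\theta\mapsto \beta_1+\beta_{n-1}-\theta$ together with the identity $\sigma(x)+\sigma(-x)=1$) identifies $\theta_2$ as the solution of $P_{n-1}=P_n$.

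Next I would compute the modal abscissae from Lemma~\ref{lem:peak}: since $\beta_k=\beta_1+(k-1)\gamma$, one has $\theta_2^*=\beta_1+\gamma/2$ and $\theta_{n-1}^*=\beta_{n-2}+\gamma/2$. Substituting these into $\theta_1<\theta_2^*$ rearranges to
\begin{equation*}
-\ln\!\bigl(1-2e^{-D\alpha\gamma}\bigr)<\frac{D\alpha\gamma}{2},
\end{equation*}
and substituting into $\theta_2>\theta_{n-1}^*$ rearranges to
\begin{equation*}
\ln\!\bigl(e^{D\alpha\gamma}-2\bigr)>\frac{D\alpha\gamma}{2}.
\end{equation*}
Exponentiating, both conditions collapse to the single inequality $e^{D\alpha\gamma}-2>e^{D\alpha\gamma/2}$, i.e.\ $u^2-u-2>0$ with $u=e^{D\alpha\gamma/2}>0$.

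Finally, I would factor $u^2-u-2=(u-2)(u+1)$; since $u+1>0$, the inequality holds iff $u>2$, which is equivalent to $D\alpha\gamma/2>\ln 2$, namely $\gamma>\frac{2\ln 2}{D\alpha}$. This shows that the hypothesized threshold is not only sufficient but in fact the sharp one. There is no real obstacle here beyond bookkeeping: the only subtle point is the symmetry argument used to transfer the $P_1=P_2$ calculation to the $P_{n-1}=P_n$ intersection, which I would make explicit to avoid a duplicated computation and, at the same time, to justify the reflected form of $\theta_2$ around $\beta_{n-2}$ rather than $\beta_1$.
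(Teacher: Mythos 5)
Your proposal is correct and follows essentially the same route as the paper, which simply asserts that solving $P_1=P_2$ and $P_{n-1}=P_n$ yields $\theta_1,\theta_2$ and that ``a simple calculation'' gives the two inequalities; you carry out exactly that calculation, reducing both inequalities to $(u-2)(u+1)>0$ with $u=e^{D\alpha\gamma/2}$. The only genuine additions are welcome refinements rather than a different method: you make the symmetry $\theta\mapsto\beta_1+\beta_{n-1}-\theta$ explicit to avoid a duplicated computation, and you observe that the threshold $\gamma>2\ln 2/(D\alpha)$ is sharp, not merely sufficient.
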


\begin{proof}
Solving \(P_1(\theta) = P_2(\theta)\) gives \(\theta_1\). By the inequality \(\gamma > 2 \ln 2 / (D\alpha)\), a simple calculation shows \(\theta_1 < \theta^*_2 = (\beta_1 + \beta_2)/2\). Similarly, \(P_n(\theta) = P_{n-1}(\theta)\) gives \(\theta_2 > \theta^*_{n-1}\). {Hence, these inequalities ensure that the global maximum occurs at a unique interior category, guaranteeing the uniqueness of the peak.}
\end{proof}

\begin{corollary}[Boundary Probability Ordering]
\label{cor:boundary_ordering}
{Under the condition $\gamma > \frac{2 \ln 2}{D\alpha}$, the boundary probabilities exhibit the following ordering relationships:}
\begin{enumerate}
    \item \textbf{First category relationship:}
    \begin{align*}
    P_1(\theta) &> P_2(\theta), \quad \text{for } \theta < \theta_1 \\
    P_1(\theta) &< P_2(\theta), \quad \text{for } \theta > \theta_1
    \end{align*}
    
    \item \textbf{Last category relationship:}
    \begin{align*}
    P_n(\theta) &< P_{n-1}(\theta), \quad \text{for } \theta < \theta_2 \\
    P_n(\theta) &> P_{n-1}(\theta), \quad \text{for } \theta > \theta_2
    \end{align*}
\end{enumerate}
\end{corollary}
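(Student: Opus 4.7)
\medskip
\noindent\textbf{Proof Proposal.} The plan is to reduce each ordering claim to the analysis of a single continuous difference function whose zeros can be located algebraically, and then to use the hypothesis $\gamma > 2\ln 2/(D\alpha)$ twice: once to guarantee that crossing is unique, and once to fix the sign pattern through two well-chosen test evaluations.

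For the first assertion I would define $f(\theta) = P_{1}(\theta) - P_{2}(\theta)$. Substituting from \eqref{eq:pk_def} and letting $u = D\alpha(\theta - \beta_{1})$, $c = D\alpha\gamma$, one obtains $f(\theta) = 1 - 2\sigma(u) + \sigma(u - c)$. Clearing denominators in $\sigma(x) = 1/(1+e^{-x})$ reduces $f(\theta) = 0$ to a single linear equation in $y = e^{-u}$, whose unique positive root under $e^{c} > 2$ is $y = 1 - 2e^{-c}$. Back-substituting yields $u = -\ln(1 - 2e^{-c})$, which recovers precisely the point $\theta_{1}$ defined in Lemma~\ref{lem:boundary}. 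Hence $\theta_{1}$ is the unique zero of $f$ on $\mathbb{R}$.

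To pin down the sign of $f$ on either side of $\theta_{1}$, I would evaluate $f$ at two simple test locations. First, $\lim_{\theta \to -\infty} f(\theta) = 1 > 0$ because both sigmoids vanish there, so $f > 0$ on an initial interval. Second, at $\theta = \beta_{2}$ a direct computation gives $f(\beta_{2}) = 3/2 - 2\sigma(c)$, and the hypothesis $\gamma > 2\ln 2/(D\alpha)$ yields $\sigma(c) \geq \sigma(2\ln 2) = 4/5 > 3/4$, so $f(\beta_{2}) < 0$. Since $\beta_{2} > \theta_{1}$, which follows from $\theta_{1} < \theta_{2}^{*} = (\beta_{1}+\beta_{2})/2 < \beta_{2}$ via Lemma~\ref{lem:boundary}, continuity together with the uniqueness of the zero forces $f > 0$ on $(-\infty, \theta_{1})$ and $f < 0$ on $(\theta_{1}, \infty)$, which is exactly the first assertion.

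The second assertion is symmetric. The reflection $\theta \mapsto \beta_{1} + \beta_{n-1} - \theta$, combined with the identity $\sigma(-x) = 1 - \sigma(x)$, maps the pair $(P_{1}, P_{2})$ onto $(P_{n}, P_{n-1})$ and sends $\theta_{1}$ to $\theta_{2}$, so the ordering pattern flips around $\theta_{2}$ as claimed; alternatively, one can simply repeat the above reduction with $g(\theta) = P_{n}(\theta) - P_{n-1}(\theta)$ and use the matching endpoint evaluations at $+\infty$ and $\beta_{n-2}$. The main obstacle is the sign-pinning step: two test evaluations only determine the full sign pattern once uniqueness of the zero is established, which is precisely why the algebraic reduction to a linear equation in $e^{-u}$ — and the explicit use of $c > 2\ln 2$ to make the root both positive and well-separated from $\beta_{2}$ — is essential rather than merely convenient.
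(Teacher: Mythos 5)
Your proposal is correct. The algebra checks out: with $u = D\alpha(\theta-\beta_1)$ and $c = D\alpha\gamma$, clearing denominators in $1 - 2\sigma(u) + \sigma(u-c) = 0$ gives $y\,(y e^{c} - e^{c} + 2) = 0$ in $y = e^{-u}$ (a factored quadratic whose only admissible root is the linear factor's), yielding $y = 1 - 2e^{-c}$ and hence exactly the $\theta_1$ of Lemma~\ref{lem:boundary}; the two test evaluations $f(-\infty)=1>0$ and $f(\beta_2) = 3/2 - 2\sigma(c) < 0$ (using $\sigma(c) > 4/5$ under $c > 2\ln 2$) then pin the sign pattern, and you correctly note that uniqueness of the zero is what makes two test points sufficient. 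The reflection $\theta \mapsto \beta_1 + \beta_{n-1} - \theta$ does map $(P_1, P_2)$ to $(P_n, P_{n-1})$ and sends $\theta_1$ to $\theta_2$ (since $\ln(e^{c}-2) = c + \ln(1-2e^{-c})$), so the second assertion follows. The paper itself states this corollary without proof, treating it as an immediate consequence of Lemma~\ref{lem:boundary} together with a monotonicity argument: $P_1$ is strictly decreasing from $1$ to $0$, $P_2$ is unimodal with its peak to the right of $\theta_1$, and a single crossing of a decreasing curve with a bump fixes the ordering on each side. Your route replaces that geometric reasoning with an explicit algebraic root count plus sign tests, which is more self-contained (it does not lean on the shape of $P_2$) at the cost of more computation; your symmetry reduction for the last-category case is cleaner than the paper's implicit strategy of repeating the mirror argument verbatim. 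Two cosmetic points: the inequality $\sigma(c) \geq \sigma(2\ln 2)$ should be strict, and positivity of the root $1 - 2e^{-c}$ already follows from $c > \ln 2$; the stronger hypothesis $c > 2\ln 2$ is what you actually need for the sign test at $\beta_2$ and for the placement $\theta_1 < \theta_2^*$ used elsewhere.
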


\subsection*{Main Theorem}

\begin{theorem}[Unimodality of GRM Probabilities]
\label{thm:unimodal}
Under the arithmetic difficulty sequence \(\beta = \{\beta_1 + (k-1)\gamma\}_{k=1}^{n-1}\) with \(\gamma > \frac{2 \ln 2}{D\alpha}\), there exists a unique index \(k^*\) such that
\[
P_{k^*}(\theta) \ge P_k(\theta), \quad \forall k \neq k^*,
\]
and the distribution is strictly monotone on both sides of the peak:
\[
P_1 < \cdots < P_{k^*-1} < P_{k^*} > P_{k^*+1} > \cdots > P_n.
\]

\begin{itemize}
    \item {When the mode $k^* = 1$: $P_1(\theta) > P_2(\theta) > \cdots > P_n(\theta)$}
    \item When $2 \le k^* \le n-1$: $P_1(\theta) < P_2(\theta) < \cdots < P_{k^*}(\theta) > \cdots > P_{n-1}(\theta) > P_n(\theta)$
    \item When the mode $k^* = n$: $P_1(\theta) < P_2(\theta) < \cdots < P_n(\theta)$
\end{itemize}
\end{theorem}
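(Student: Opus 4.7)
The plan is to prove the theorem by partitioning the $\theta$-axis into three regions using the boundary intersection points $\theta_1$ and $\theta_2$ from Lemma~\ref{lem:boundary}, and to handle each region by combining the Shift Property (Lemma~\ref{lem:shift}), the Peak Lemma (Lemma~\ref{lem:peak}), and the Boundary Probability Ordering (Corollary~\ref{cor:boundary_ordering}). This partition lines up precisely with the three cases in the theorem statement: $\theta < \theta_1$ yields $k^{*}=1$, $\theta \in [\theta_1,\theta_2]$ yields an interior peak $2\le k^{*}\le n-1$, and $\theta > \theta_2$ yields $k^{*}=n$.

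First I would treat the left region $\theta < \theta_1$. Corollary~\ref{cor:boundary_ordering} immediately gives $P_1(\theta) > P_2(\theta)$. By Lemma~\ref{lem:boundary}, $\theta_1 < \theta_2^{*} = (\beta_1+\beta_2)/2$, and since $\theta_k^{*}$ increases in $k$, we also have $\theta < \theta_k^{*}$ for every $2 \le k \le n-1$, so every intermediate $P_k$ is strictly increasing at $\theta$ by Lemma~\ref{lem:peak}. Applying the shift identity $P_{k+1}(\theta) = P_k(\theta-\gamma)$ with both $\theta$ and $\theta-\gamma$ lying in the increasing region of $P_k$ forces $P_k(\theta) > P_{k+1}(\theta)$ for $2 \le k \le n-2$. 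Finally, $P_{n-1}(\theta) > P_n(\theta)$ follows from Corollary~\ref{cor:boundary_ordering} together with $\theta < \theta_1 < \theta_2$. Chaining these inequalities gives $P_1 > P_2 > \cdots > P_n$, \ie\ $k^{*}=1$. The right region $\theta > \theta_2$ is dispatched by the fully symmetric argument, yielding $k^{*}=n$.

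Next I would address the central region $\theta \in [\theta_1,\theta_2]$, producing the interior-peak case $2 \le k^{*} \le n-1$. Corollary~\ref{cor:boundary_ordering} supplies $P_2(\theta) \ge P_1(\theta)$ and $P_{n-1}(\theta) \ge P_n(\theta)$, ruling out $k^{*} \in \{1,n\}$. Corollary~\ref{cor:unimodal} then delivers the full unimodal ordering of the interior family $\{P_k\}_{k=2}^{n-1}$, identifying a unique interior mode $k^{*}$ and enforcing strict monotone decay on either side. Concatenating this with the two boundary inequalities produces the desired chain $P_1 < P_2 < \cdots < P_{k^{*}} > \cdots > P_{n-1} > P_n$.

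The main obstacle I expect is making the three-region argument genuinely exhaustive and non-overlapping, and in particular guaranteeing that within the middle region an interior curve strictly dominates both boundary curves at every $\theta$. This hinges on the inequality $\gamma > 2\ln 2/(D\alpha)$, which via Lemma~\ref{lem:boundary} forces $\theta_1 < \theta_2^{*}$ and $\theta_2 > \theta_{n-1}^{*}$, preventing the boundary crossings from drifting into the interior peaks and thereby preserving the uniqueness of $k^{*}$. Care will also be needed for small-$n$ edge cases (\eg\ $n=3$, where the interior family collapses to the single curve $P_2$) so that the shift-based chaining still applies vacuously, and for ensuring the inequalities are strict at every $\theta$ except precisely at the measure-zero crossing points themselves.
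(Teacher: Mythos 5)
Your proposal is correct and follows essentially the same route as the paper's own proof: the same three-way partition of the $\theta$-axis at $\theta_1$ and $\theta_2$, with the left and right regions handled via the boundary ordering plus the shift property, and the middle region via the peak lemma and the unimodal ordering of the interior family. Your version is slightly more explicit about why the shift identity forces $P_k(\theta) > P_{k+1}(\theta)$ when both arguments lie in the increasing region, but the argument is the same.
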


\begin{proof}
Divide the domain of \( \theta \) into three parts \( (-\infty, \theta_1) \), \( (\theta_1, \theta_2) \), and \( (\theta_2, +\infty) \).

\textbf{(i) For \(\theta < \theta_1\):} By Lemma~\ref{lem:boundary}, \(P_1 > P_2\). Lemma~\ref{lem:shift} ensures higher \(k\) satisfy \(P_k < P_2\) (see Fig.~\ref{fig:append}(b)). Thus,
\[
P_1 > P_2 > \dots > P_n.
\]

\textbf{(ii) For \(\theta_1 < \theta < \theta_2\):} Lemma~\ref{lem:peak} and Lemma~\ref{lem:shift} guarantee a unique \(k^*\) with \(P_{k^*-1} < P_{k^*} > P_{k^*+1}\). Combining with boundary comparisons yields strict unimodality.

\textbf{(iii) For \(\theta > \theta_2\):} Lemma~\ref{lem:boundary} gives \(P_n > P_{n-1}\), and Lemma~\ref{lem:shift} ensures lower \(k\) are smaller (see Fig.~\ref{fig:append}(c)). Therefore,
\[
P_1 < P_2 < \dots < P_n.
\]
In all intervals, the distribution has a unique peak with strictly decreasing probabilities on both sides.
\end{proof}

\begin{corollary}
The condition \(\gamma > \frac{2 \ln 2}{D \alpha}\) is sufficient to guarantee the uniqueness of the global peak and the unimodal structure.
\end{corollary}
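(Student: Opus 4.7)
The plan is to derive this corollary as an immediate structural consequence of Theorem~\ref{thm:unimodal}, since the bound $\gamma > \frac{2\ln 2}{D\alpha}$ is the only quantitative hypothesis on $\gamma$ invoked anywhere in the preceding lemma chain. My sketch therefore reduces to two tasks: auditing where the threshold was actually used, and confirming that it alone is sufficient to drive the entire unimodality argument without any auxiliary assumption on $\theta$.

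First I would isolate the role of $\gamma > \frac{2\ln 2}{D\alpha}$ in the earlier lemmas. Lemmas~\ref{lem:peak} and~\ref{lem:shift} require only $\gamma > 0$: the former locates the unique interior peak at $\theta_k^* = (\beta_{k-1}+\beta_k)/2$, while the latter establishes that the intermediate probabilities $\{P_k\}_{k=2}^{n-1}$ are mutual horizontal translates by $\gamma$ with equally-spaced adjacent intersections. The quantitative threshold enters only through Lemma~\ref{lem:boundary}: solving $P_1 = P_2$ in closed form yields the intersection $\theta_1$, and imposing $\theta_1 < \theta_2^*$ collapses, after simplification, to exactly $\gamma > \frac{2\ln 2}{D\alpha}$. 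By the symmetric ``$1-$'' structure of the boundary categories $P_1$ and $P_n$, the same scalar bound simultaneously delivers $\theta_2 > \theta_{n-1}^*$ at the upper end.

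Next I would invoke the three-case decomposition of the $\theta$-axis already used in the proof of Theorem~\ref{thm:unimodal}. On $(-\infty,\theta_1)$ the ordering of Corollary~\ref{cor:boundary_ordering} gives $P_1 > P_2$, and the shift property propagates this to $P_1 > P_2 > \cdots > P_n$, producing a unique mode at $k^* = 1$. On $(\theta_2,+\infty)$ the symmetric argument yields $k^* = n$. On the middle interval $(\theta_1,\theta_2)$, Lemmas~\ref{lem:peak} and~\ref{lem:shift} force the interior family to admit a unique modal index $k^* \in \{2,\dots,n-1\}$ with strict monotone decay on each side, while Lemma~\ref{lem:boundary} prevents $P_1$ or $P_n$ from re-entering the competition inside this region, preserving global unimodality.

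There is no genuinely difficult step: the only delicate point is the algebraic derivation of the $\frac{2\ln 2}{D\alpha}$ threshold inside Lemma~\ref{lem:boundary}, but that computation is already absorbed into the hypothesis being cited. The corollary therefore reduces to a one-line appeal to Theorem~\ref{thm:unimodal}, whose two conclusions---uniqueness of the modal category and strict monotone decay on both sides---are exactly the two conclusions claimed here, both established under the single sufficient condition $\gamma > \frac{2\ln 2}{D\alpha}$.
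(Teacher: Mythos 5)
Your proposal is correct and follows essentially the same route as the paper: the corollary is intended as an immediate consequence of Theorem~\ref{thm:unimodal}, and your audit correctly identifies that the threshold $\gamma > \frac{2\ln 2}{D\alpha}$ enters only through Lemma~\ref{lem:boundary} (to place $\theta_1$ before $\theta^*_2$ and $\theta_2$ after $\theta^*_{n-1}$), with the remaining lemmas and the three-interval decomposition of the $\theta$-axis carrying the rest of the argument exactly as in the paper's proof of the theorem. No gap.
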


\bibliographystyle{IEEEtran}
\small{
\bibliography{ref}}

\begin{IEEEbiography}[{\includegraphics[width=1in,height=1.25in, clip,keepaspectratio]{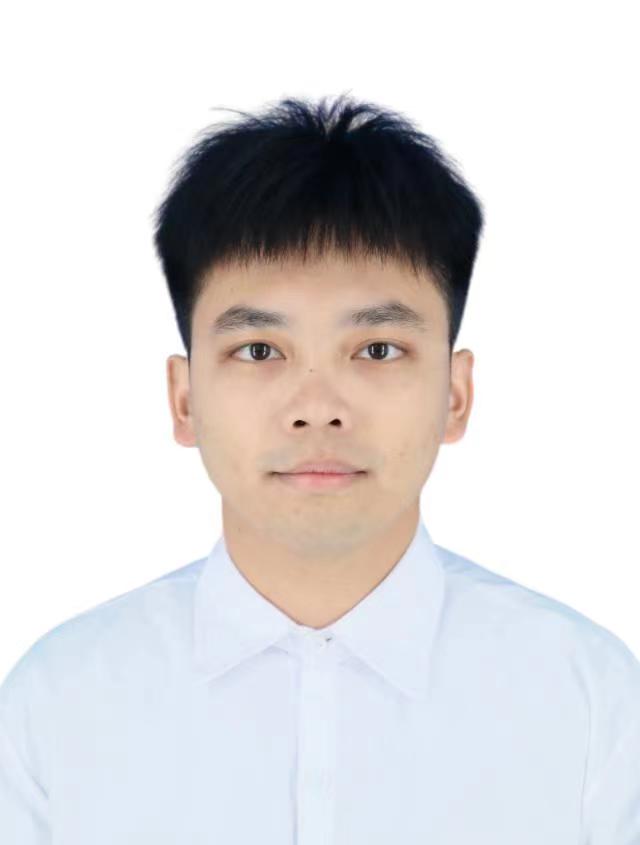}}]{Zhicheng Liao} received the B.S. degree from South China Normal University, China, in 2025. He is currently pursuing the M.S. degree with the Department of Computer Science, South China Normal University, China. His research interests include image quality assessment, image quality enhancement and diffusion model.
\end{IEEEbiography}

\begin{IEEEbiography}[{\includegraphics[width=1in,height=1.25in, clip,keepaspectratio]{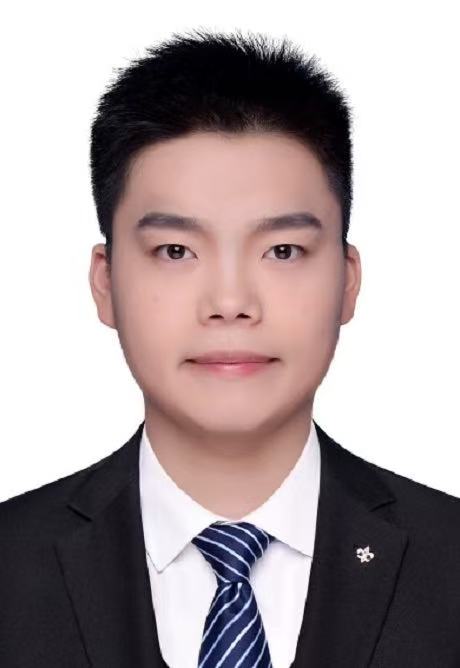}}]{Baoliang Chen} (Member, IEEE) received his B.E. degree in Electronic Information Science and Technology from Hefei University of Technology, Hefei, China, in 2015, his M.S. degree in Intelligent Information Processing from Xidian University, Xian, China, in 2018, and his Ph.D. degree in computer science from the City University of Hong Kong, Hong Kong, in 2022.  From 2022 to 2024, he was a postdoctoral researcher with the Department of Computer Science, City University of Hong Kong. He is currently an Associate Professor with the Department of Computer Science, South China Normal University.  His research interests include image/video quality assessment and transfer learning.
\end{IEEEbiography}

\begin{IEEEbiography}[{\includegraphics[width=1in,height=1.25in, clip,keepaspectratio]{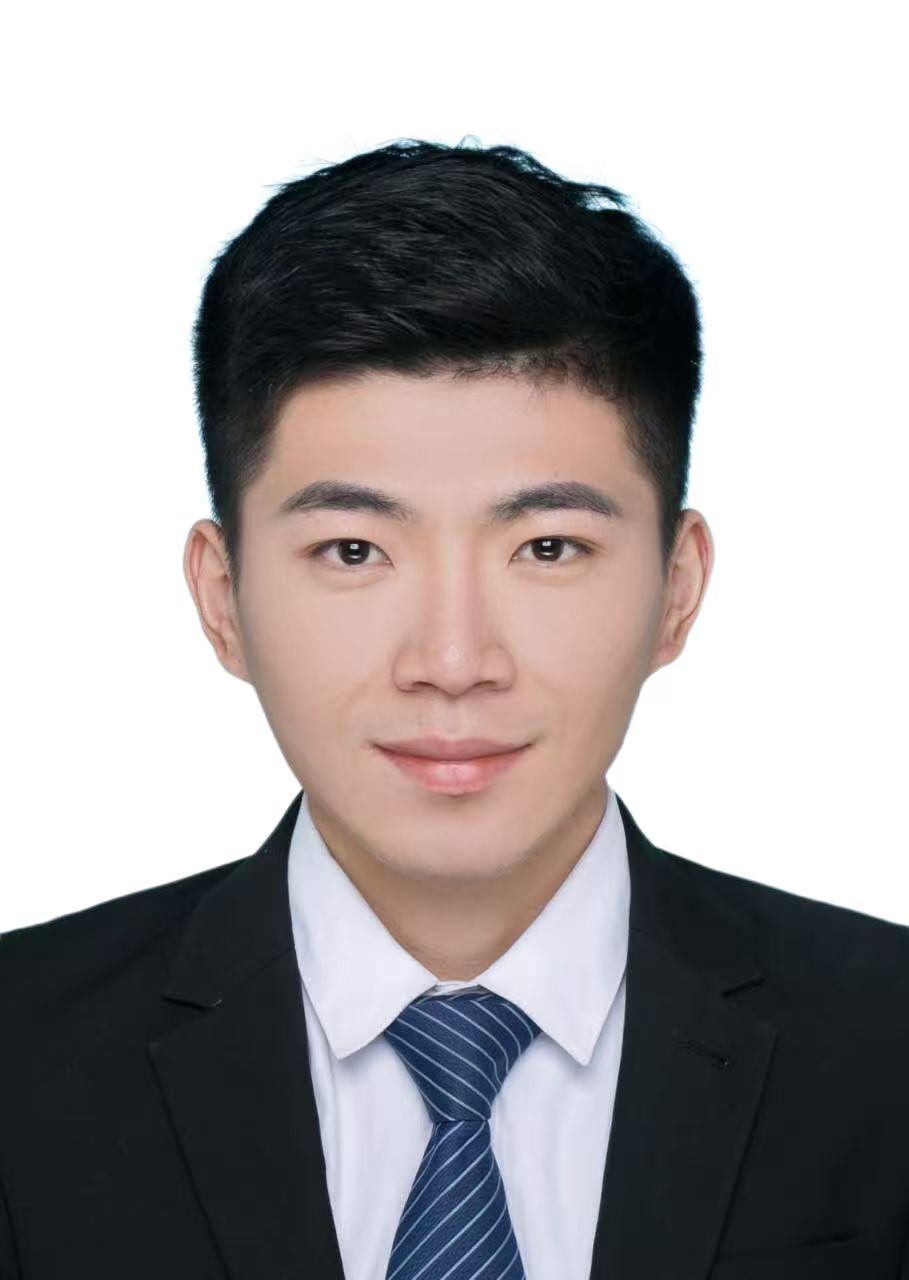}}]{Hanwei Zhu} (Member, IEEE) received the B.E and M.S. degrees from the Jiangxi University of Finance and Economics, Nanchang, China, in 2017 and 2020, respectively, and Ph.D. degree in computer science from the City University of Hong Kong, Hong Kong, in 2025.  He is currently a research scientist with the Alibaba-NTU Global e-Sustainability CorpLab (ANGEL) at Nanyang Technological University. His research interests include perceptual image processing, computational vision, and computational photography.
\end{IEEEbiography}

\begin{IEEEbiography}[{\includegraphics[width=1in,height=1.25in,clip,keepaspectratio]{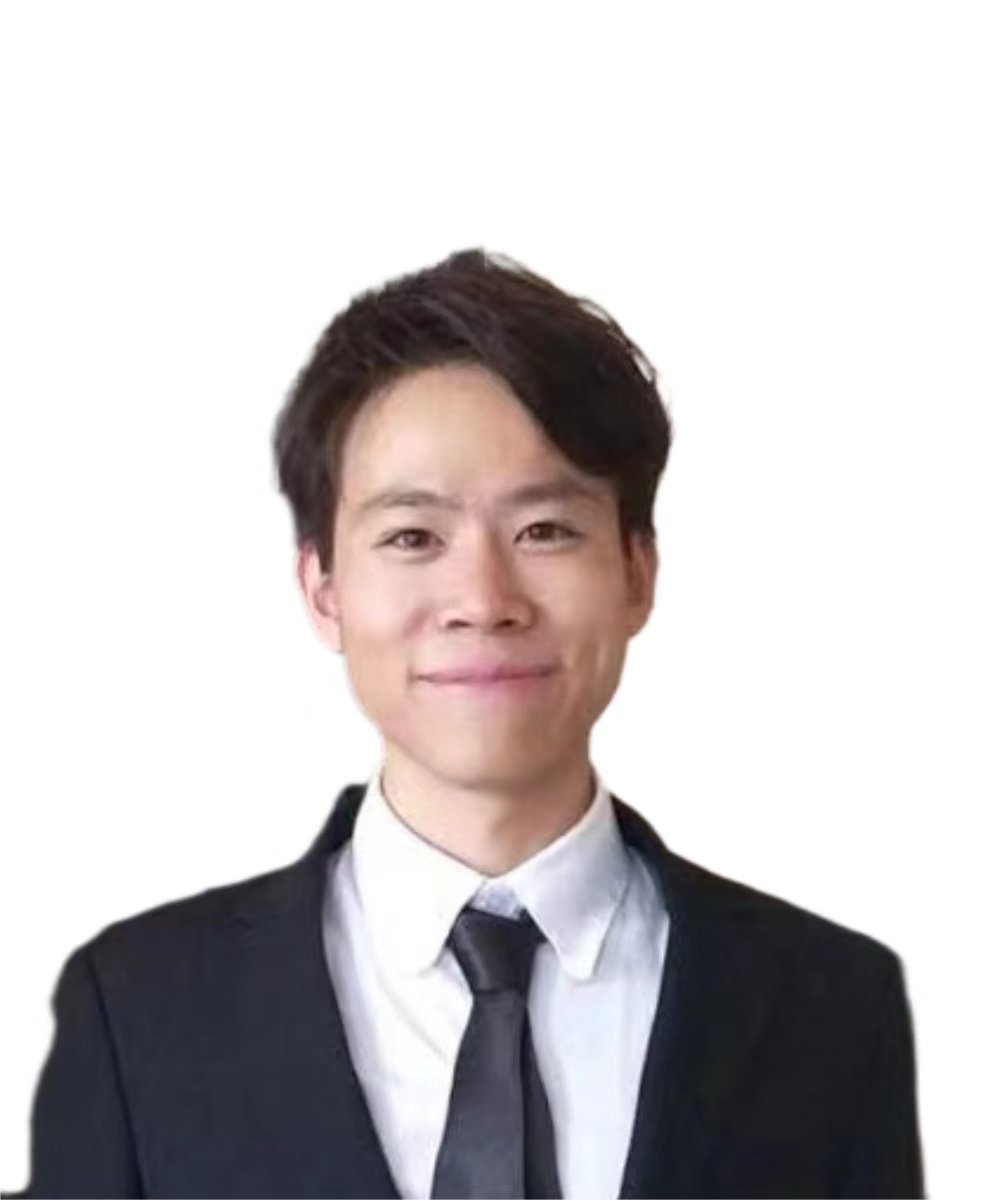}}]{Lingyu Zhu} (Member, IEEE) received the B.E. degree from Wuhan University of Technology in 2018, the M.S. degree from The Hong Kong University of Science and Technology in 2019, and the Ph.D. degree in computer science from the City University of Hong Kong, Hong Kong SAR, China, in 2024. He is currently a postdoctoral researcher with the Department of Computer Science at City University of Hong Kong. His research interests include image/video compression, image/video enhancement, image/video quality assessment, and deep learning.
\end{IEEEbiography}

\begin{IEEEbiography}[{\includegraphics[width=1in,height=1.25in,clip,keepaspectratio]{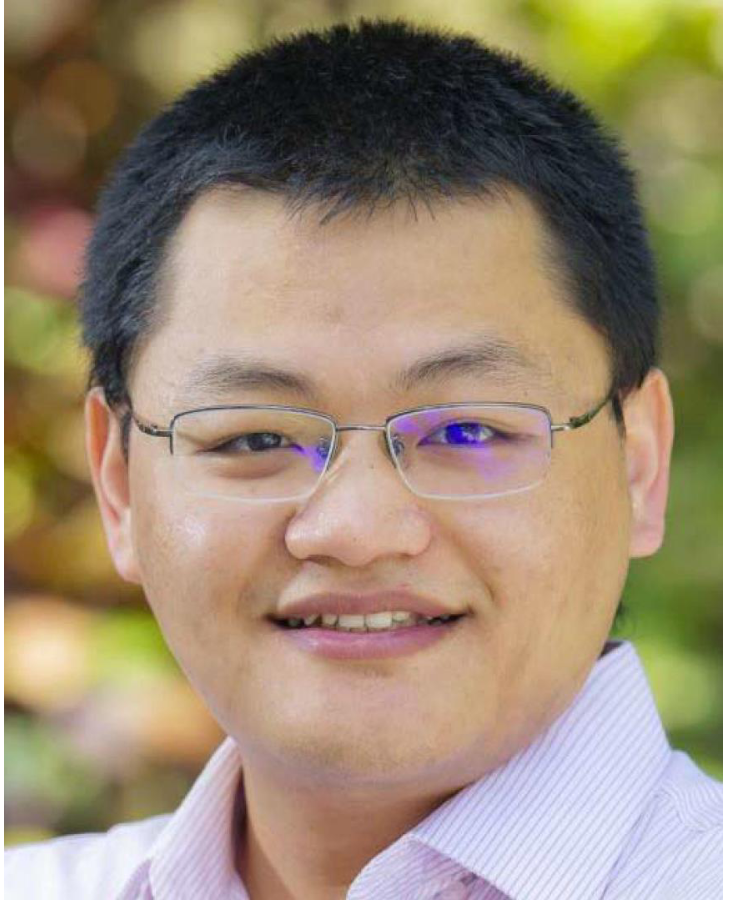}}]{Shiqi Wang} (Senior Member, IEEE) received the Ph.D. degree in computer application technology from Peking University in 2014. He is currently an Professor with the Department of Computer Science, City University of Hong Kong. He has proposed more than 70 technical proposals to ISO/MPEG, ITU-T, and AVS standards. He authored or coauthored more than 300 refereed journal articles/conference papers, including more than 100 IEEE Transactions. His research interests include video compression, image/video quality assessment, video coding for machine, and semantic communication. He received the Best Paper Award from IEEE VCIP 2019, ICME 2019, IEEE Multimedia 2018, and PCM 2017. His coauthored article received the Best Student Paper Award in the IEEE ICIP 2018. He served or serves as an Associate Editor for IEEE TIP, TCSVT, TMM, TCyber, Access, and APSIPA Transactions on Signal and Information Processing.
\end{IEEEbiography}

\begin{IEEEbiography}[{\includegraphics[width=1in,height=1.25in,clip,keepaspectratio]{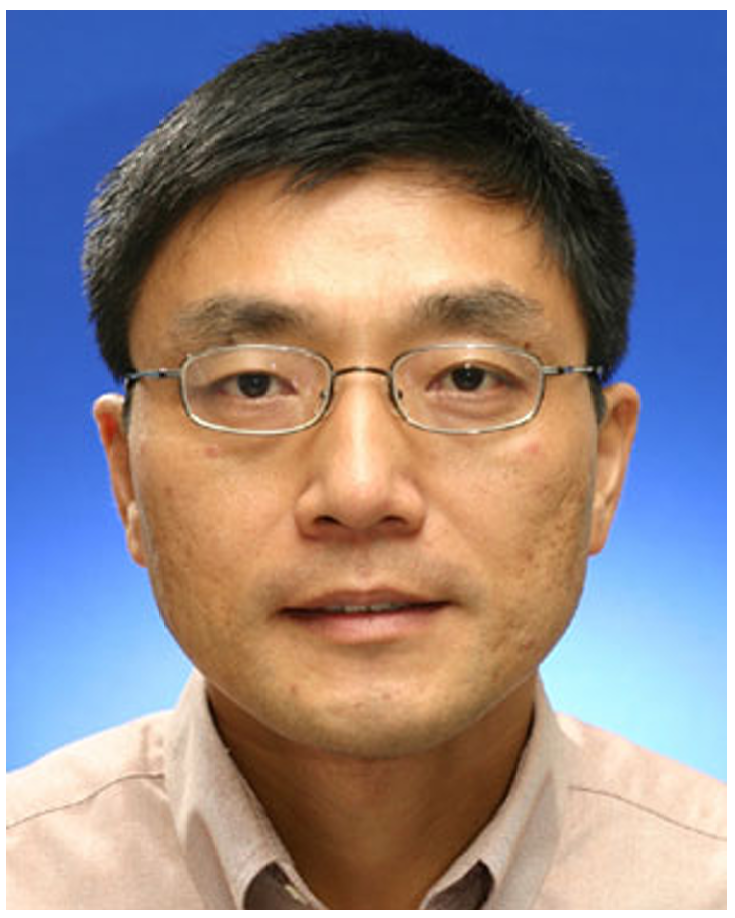}}]{Weisi Lin} (Fellow, IEEE) received the Ph.D. degree from the King’s College, University of London, U.K. He is currently a Professor with the College of Computing and Data Science, Nanyang Technological University. His areas of expertise include image processing, perceptual signal modeling, video compression, and multimedia communication, in which he has published over 200 journal articles, over 230 conference papers, filed seven patents, and authored two books. He has been an invited/panelist/keynote/tutorial speaker at over 20 international conferences. He is a fellow of IET and an Honorary Fellow of Singapore Institute of Engineering Technologists. He has been the Technical Program Chair of IEEE ICME 2013, PCM 2012, QoMEX 2014, and IEEE VCIP 2017. He has been an Associate Editor of IEEE TRANSACTIONS ON IMAGE PROCESSING, IEEE TRANSACTIONS ON CIRCUITS AND SYSTEMS FOR VIDEO TECHNOLOGY, IEEE TRANSACTIONS ON MULTIMEDIA, and IEEE SIGNAL PROCESSING LETTERS. He was a Distinguished Lecturer of Asia-Pacific Signal and Information Processing Association (APSIPA) from 2012 to 2013 and the IEEE Circuits and Systems Society from 2016 to 2017.
\end{IEEEbiography}

\end{document}